\gdef\@copyrightpermission{
  \begin{minipage}{0.2\columnwidth}
   \href{https://creativecommons.org/licenses/by/4.0/}{\includegraphics[width=0.90\textwidth]{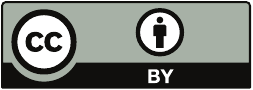}}
  \end{minipage}\hfill
  \begin{minipage}{0.8\columnwidth}
   \href{https://creativecommons.org/licenses/by/4.0/}{This work is licensed under a Creative Commons Attribution International 4.0 License.}
  \end{minipage}
  \vspace{5pt}
}
\title[OPS]{When is Offline Policy Selection Sample Efficient for Reinforcement Learning?}
\author{Vincent Liu}\authornote{Work done at University of Alberta.}
\affiliation{
  \institution{RBC Borealis}
  \city{Toronto}
  \country{Canada}}
\email{vincent.liu@rbc.com}
\author{Prabhat Nagarajan}
\affiliation{
  \institution{University of Alberta}
  \city{Edmonton}
  \country{Canada}}
\email{nagarajan@ualberta.ca}
\author{Andrew Patterson}
\affiliation{
  \institution{Alberta Machine Intelligence Institute}
  \city{Edmonton}
  \country{Canada}}
\email{ap3@ualberta.ca}
\author{Martha White}
\affiliation{
  \institution{University of Alberta}
  \city{Edmonton}
  \country{Canada}}
\email{whitem@ualberta.ca}
\begin{abstract}
Offline reinforcement learning algorithms often require careful hyperparameter tuning. Before deployment, we need to select amongst a set of candidate policies. However, there is limited understanding about the fundamental limits of this offline policy selection (OPS) problem. In this work we provide clarity on when sample efficient OPS is possible, primarily by connecting OPS to off-policy policy evaluation (OPE) and Bellman error (BE) estimation. We first show a hardness result, that in the worst case, OPS is just as hard as OPE, by proving a reduction of OPE to OPS. As a result, no OPS method can be more sample efficient than OPE in the worst case.  We then connect BE estimation to the OPS problem, showing how BE can be used as a tool for OPS. While BE-based methods generally require stronger requirements than OPE, when those conditions are met they can be more sample efficient. Building on this insight, we propose a BE method for OPS, called Identifiable BE Selection (IBES), that has a straightforward method for selecting its own hyperparameters. We conclude with an empirical study comparing OPE and IBES, and by showing the difficulty of OPS on an offline Atari benchmark dataset.
\end{abstract}
\keywords{Offline reinforcement learning; off-policy policy evaluation}
\newcommand{\BibTeX}{\rm B\kern-.05em{\sc i\kern-.025em b}\kern-.08em\TeX}
\newcommand{\bellman}{\mathcal{T}}
\newcommand{\norm}[1]{\| #1 \|}
\newcommand{\EE}{{\mathbb{E}}}
\newcommand{\bbP}{{\mathbb{P}}}
\newcommand{\ZZ}{{\mathbb{Z}}}
\newcommand{\II}{{\mathbb{I}}}
\newcommand{\cE}{\mathcal{E}}
\newcommand{\cL}{{\mathcal{L}}}
\newcommand{\cM}{{\mathcal{M}}}
\newcommand{\cI}{{\mathcal{I}}}
\newcommand{\cF}{\mathcal{F}}
\newcommand{\cG}{\mathcal{G}}
\newcommand{\cS}{\mathcal{S}}
\newcommand{\cA}{\mathcal{A}}
\newcommand{\cQ}{\mathcal{Q}}
\theoremstyle{definition} 
\newtheorem{definition}{Definition}
\newenvironment{mylemma}[1]
{\innercustomlemma}
{\endinnercustomlemma}
\def\Pr{\mathop{\rm Pr}\nolimits}
 \newcommand{\defeq}{:=}
\DeclareMathOperator*{\argmax}{arg\,max}
\DeclareMathOperator*{\argmin}{arg\,min}
\begin{document}


\pagestyle{fancy}
\fancyhead{}


\maketitle 


\section{Introduction}

Offline reinforcement learning (RL)---learning a policy from a dataset---is useful for many real-world applications, where learning from online interaction may be expensive or dangerous \citep{levine2020offline}.
There have been significant advances in offline policy learning algorithms with demonstrations that performant policies can be learned purely from offline data \citep{agarwal2020optimistic}.
Successful use of these algorithms, however, requires careful hyperparameter selection \citep{wu2019behavior,gulcehre2020rl,kumar2021workflow}.

Despite the fact that hyperparameter selection is essential, it remains relatively unexplored for offline policy learning. One of the reasons is that it is inherently hard, as we show formally in this paper. Unlike supervised learning where validation performance can be used to select hyperparameters, in offline RL it is hard to validate a policy's performance.
In fact, it is well known that \textit{off-policy policy evaluation} (OPE)---estimating the performance of a policy from offline data---is hard \citep{wang2020statistical}.
To emphasize this fact, in Figure \ref{fig:correlation}, we visualize the poor correlation between the true performance and performance estimates using a standard OPE approach called Fitted Q Evaluation (FQE) \citep{le2019batch}.
The datasets are not adversarially designed; rather they are two Atari datasets that were designed to provide reasonable data for offline learning algorithms \citep{fu2020d4rl}.

\begin{figure}[t]
    \centering
    \includegraphics[scale=0.24]{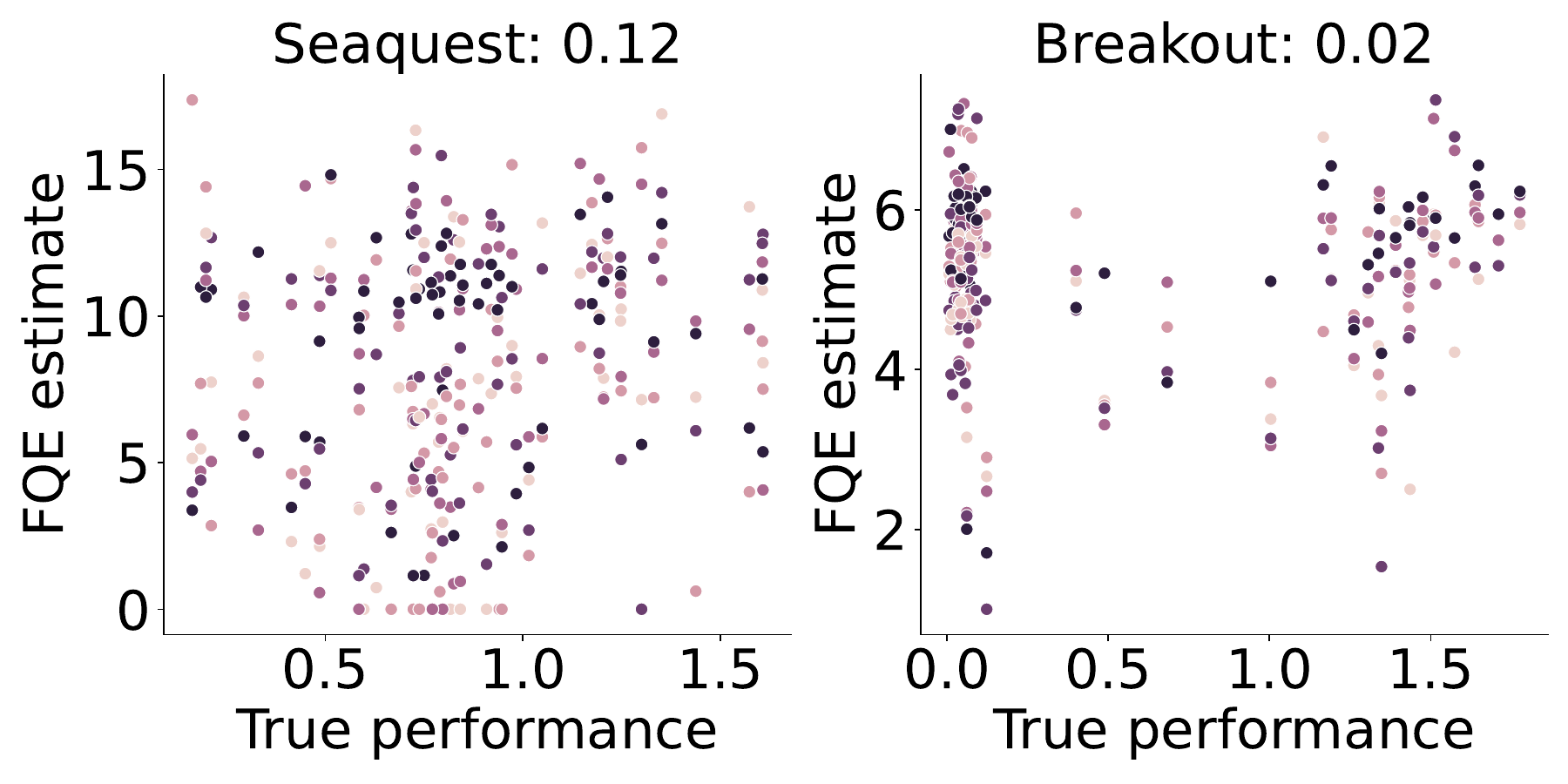}
    \caption{
        Correlation between true performance and estimated performance on Atari datasets. We generate 90 policies offline using the CQL algorithm with different choices for two hyperparameters: the number of training steps and the conservative parameter, and evaluate these policies using FQE on 5 different datasets. 
        Each point in the scatter plot corresponds to a (policy, evaluation dataset) pair. 
        The x-axis is the actual policy performance for that policy and y-axis is the estimated policy performance for that policy using that evaluation dataset. Colors represent different evaluation datasets. The Kendall rank correlation coefficient is shown in the title of the plot. If the FQE estimates accurately rank policies, we expect to see a strong linear relationship and a Kendall rank correlation coefficient close to $1$. Neither of these behaviors are seen here, and it is clear FQE does not provide an effective mechanism to rank policies. 
    }
    \Description{AAMAS.}
    \label{fig:correlation}
\end{figure}

Most offline RL papers sidestep the issue of hyperparameter selection and report results based on the best hyperparameter settings.
In other words, they tune their hyperparameters by checking the performance of the policy in the environment, which is against the spirit and formulation of offline RL.
Consider, for example, the Conservative Q-learning (CQL) algorithm \citep{kumar2020conservative}, used to obtain the policies for Figure \ref{fig:correlation}.
For their experiments, CQL is built on top of already-tuned online algorithms for their environments, which implicitly relies on tuning using the environment rather than just the offline dataset.
They then also directly tune the step size using the online performance of the offline learned policy in the environment.
Several papers \citep{kumar2019stabilizing,kostrikov2021offline} use the same methodology.
Some works focus on designing algorithms that are robust to a specific hyperparameter \citep{cheng2022adversarially}, or require only a small number of hyperparameters to be tuned \citep{fujimoto2021minimalist}. These approaches, however, still require some hyperparameters to be tuned; they were tuned using the performance in the environment.

There are a handful of works that use only the offline dataset to select hyperparameters. These can be categorized as papers focused on algorithms for hyperparameter selection in offline RL based on OPE \citep{paine2020hyperparameter,tang2021model,yang2022offline}, 
or papers that introduce offline RL algorithms and use sensible heuristics to select hyperparameters in their experiments \citep{yu2021combo,kumar2021workflow,trabucco2021conservative,qi2022data}. 
The heuristics are often tailored to the specific algorithm presented in these papers and do not come with performance guarantees. 

In general, there are as yet several open questions about the feasibility of hyperparameter selection in offline RL, both in theory and in practice. This is doubly true if we go beyond hyperparameter selection and consider selecting amongst different policy learning algorithms as well, each with their own hyperparameters.
In this paper, we consider this more general problem, called \emph{offline policy selection} (OPS), that can be used to select policy learning algorithms and their hyperparameters. Figure \ref{fig:ops_paradigm} depicts the full offline RL pipeline with OPS.

OPE is a general approach to OPS.
We can simply run OPE to produce a value estimate for each candidate policy and select the policy with the highest value estimate. 
Moreover, we might intuit that OPS may, at least in some cases, be easier than OPE.
OPS need only identify the best-performing policy, whereas OPE aims to produce an accurate value estimate of each policy, even if one candidate policy clearly dominates others. 
Therefore, the first question that needs to be answered is: 
\emph{Is OPS easier than OPE?}
Surprisingly, this basic question has not yet been explicitly answered.

Furthermore, we want to understand \emph{when, or even if, alternative approaches can outperform OPE for OPS}.
OPE might not be the ideal approach to OPS.
OPE requires a large number of samples to evaluate any given policy in the worst case \citep{wang2020statistical}. 
Moreover, OPE estimators do not resolve the OPS problem is because these OPE methods themselves have hyperparameters.
For example, the MAGIC estimator \citep{thomas2016data} and the clipped importance sampling (IS) estimator \citep{bottou2013counterfactual} are sensitive to their hyperparameters.
Even a variant of the IS estimators designed specifically for OPS \citep{yang2022offline} has hyperparameters. 
Other methods like FQE require learning a value function, and how to select the function class is still an open problem.

\begin{figure}[t]
    \centering
    \includegraphics[width=0.5\textwidth]{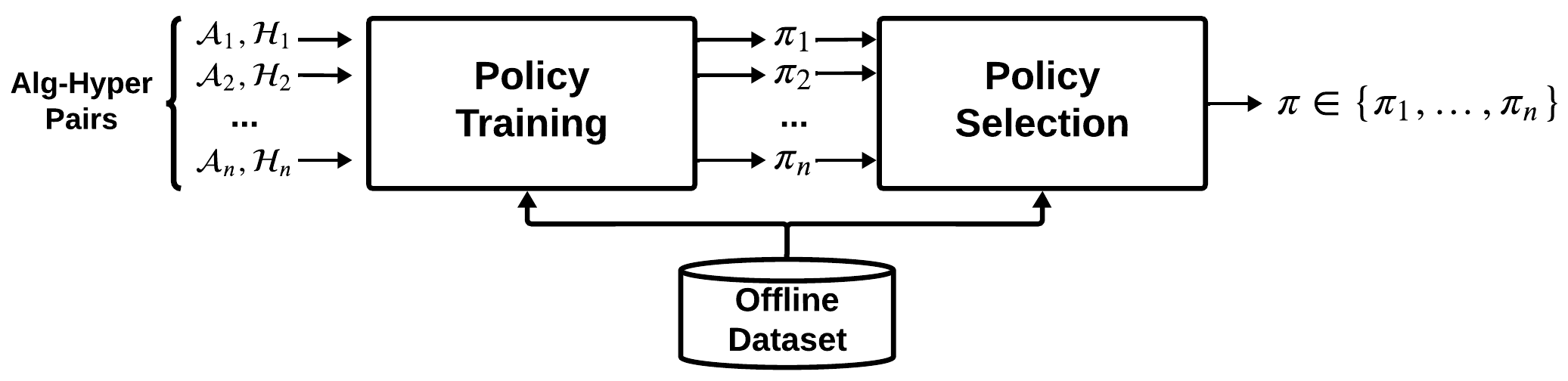}
    \caption{The offline RL pipeline with OPS. In the policy training phase, $n$ algorithm-hyperparameter pairs are trained on an offline dataset to produce $n$ candidate policies.
    An OPS algorithm then takes as input these $n$ policies, and again utilizing offline data (potentially a validation dataset), select a final policy.
    }
    \Description{AAMAS.}
    \label{fig:ops_paradigm}
\end{figure}

In this work, we make contributions towards answering these two questions. 
Our first contribution is to prove that OPS inherits the same hardness results as OPE, which has not been formally shown in the literature. 
This result implies no OPS approach can avoid exponential sample complexity in the worst case, and OPS requires additional assumptions to be sample efficient. 
Moreover, we connect the BE estimation problem to the OPS problem. We show that BE requires additional assumptions compared to OPE, but can be more efficient when the assumptions hold. 

Our second contribution is to propose a simple OPS algorithm based on Bellman errors (BE), called Identifiable BE selection (IBES), with a simple mechanism to select hyperparameters.
This is contrast to many OPE methods, even common approaches like FQE, for which hyperparameter selection is hard.
In an empirical study, we compare different BE methods with varying sample sizes and show that IBES is more sample efficient across multiple environments.

\section{Background} \label{sec:background}
In RL, the agent-environment interaction can be formalized as a finite horizon finite Markov decision process (MDP) $M=(\cS, \cA, H, s_0, Q)$. $\cS$ is a set of states with size $S=|\cS|$, and $\cA$ is a set of actions with size $A=|\cA|$, $H\in\ZZ^+$ is the horizon, and $s_0\in\cS$ is the initial state. We assume there is only a single initial state without loss of generality.
The reward $R$ and next state $S'$ are sampled from $Q$, that is, $(R,S')\sim Q(\cdot|s,a)$. 
We assume the reward is bounded in $[0, r_{max}]$ almost surely so the total return of each episode is bounded in $[0,V_{max}]$ almost surely with $V_{max}=Hr_{max}$.
The stochastic kernel $Q$ induces a transition probability $P:\cS\times\cA\to\Delta(\cS)$, and a mean reward function $r(s,a)$ which gives the mean reward when taking action $a$ in state $s$.

A non-stationary policy is a sequence of memoryless policies $(\pi_{0},\dots,\pi_{H-1})$ where $\pi_h:\cS\to\Delta(\cA)$. We assume that the set of states reachable at time step $h$, $\cS_h\subset\cS$, are disjoint, without loss of generality, because we can always define a new state space $\cS' = \cS \times [H]$, where $[H]$ denotes the set $[H]\defeq\{0,1,\dots,H-1\}$. Then, it is sufficient to consider stationary policies $\pi:\cS\to\Delta(\cA)$. 

Given a policy $\pi$, for any $h\in[H]$ and $(s,a)\in\cS\times\cA$, we define the value function and the action-value function as $v_h^\pi(s) \defeq \EE^\pi[\sum_{t=h}^{H-1} r(S_t,A_t)|S_h\!=\!s]$ and $q_h^\pi(s,a) \defeq \EE^\pi[\sum_{t=h}^{H-1} r(S_t,A_t)|S_h\!=\!s,A_h\!=\!a]$, respectively.
The expectation is with respect to $\bbP^\pi$, which is the probability measure on the random element $(S_0,A_0,R_0,\dots,R_{H-1})$ induced by the policy $\pi$. 
Similar to the stationary policy, we also use an action-value function $q$ to denote the sequence of action-value function $(q_{0},\dots,q_{H-1})$ since the set of states reachable at each time step are disjoint. That is, for all $h\in[H]$ and $s\in\cS_h$, $q(s,\cdot) = q_h(s,\cdot)$. 

The Bellman evaluation operator $\bellman^\pi$ is defined as
\begin{equation*}
    (\bellman^\pi q)(s,a) = r(s,a) + \sum_{s'\in\cS} P(s,a,s') \sum_{a'\in\cA} \pi(a'|s') q(s',a'),
\end{equation*}
with $(\bellman^\pi q)(s,\cdot) = r(s,\cdot)$ for $s\in\cS_{H-1}$. The Bellman optimality operator $(\bellman q)(s,a)$ is obtained if a greedy policy with respect to $q$ is used.
We use $J(\pi)$ to denote the value of the policy $\pi$, that is, the expected return from the initial state $J(\pi) = v^\pi_0(s_0)$. The optimal value function is defined by $v_h^*(s) \defeq \sup_{\pi} v_h^\pi(s)$. A policy $\pi$ is optimal if $J(\pi) = v^*_0(s_0)$. 

In the offline setting, we are given a fixed set of transitions $D$ with samples drawn from a data distribution $\mu$. We consider the setting where the data is collected by a behavior policy $\pi_b$ since the data collection scheme is more practical \citep{xiao2022curse} and is used to collect data for benchmark datasets \citep{fu2020d4rl}.
We use $d^\pi_h$ to denote the data distribution at the horizon $h$ by following the policy $\pi$, that is, $d^\pi_h(s,a) \defeq \bbP^{\pi}(S_h=s,A_h=a)$, and $\mu_h(s,a)\defeq \bbP^{\pi_b}(S_h=s,A_h=a)$.

\section{On the Sample Complexity of OPS} \label{sec:ops_sample_complexity}

We consider the offline policy selection (OPS) problem and off-policy policy evaluation (OPE) problem. We follow a similar notation used in \citet{xiao2022curse} to formally describe these problem settings. 
The OPS problem for a fixed number of episodes $n$ is given by the tuple $(\cS,\cA,H,\nu,n,\cI)$. $\cI$ is a set of instances of the form $(M,d_b,\Pi)$ where $M\in\cM(\cS,\cA,H,\nu)$ specifies an MDP with state space $\cS$, action space $\cA$, horizon $H$ and the initial state distribution $\nu$, $d_b$ is a distribution over a trajectory $(S_0,A_0,R_0,\dots,R_{H-1})$ by running the behavior policy $\pi_b$ on $M$, and $\Pi$ is a finite set of candidate policies. We consider the setting where $\Pi$ is finite and does not depend on $S$, $A$ or $H$.

An OPS algorithm takes as input a batch of data $D$, which contains $n$ trajectories, and a set of candidate policies $\Pi$, and outputs a policy $\pi\in\Pi$. 
The goal is to return the best policy with high probability.
\begin{definition}[$(\varepsilon,\delta)$-sound OPS algorithm]
    \label{def:ops}
    Given $\varepsilon>0$ and $\delta\in(0,1)$, an OPS algorithm $\cL$ is $(\varepsilon,\delta)$-sound on instance $(M,d_b,\Pi)$ if
    \begin{align*}
        \Pr_{D \sim d_b}(J_M(\cL(D,\Pi)) \geq J_M(\pi^\dagger) - \varepsilon) \geq 1-\delta,
    \end{align*}
    where $\pi^\dagger$ is the best policy in $\Pi$.
    We say an OPS algorithm $\cL$ is $(\varepsilon,\delta)$-sound on the problem $(\cS,\cA,H,\nu,n,\cI)$ if it is sound on any instance $(M,d_b,\Pi)\in\cI$. 
\end{definition}

Given a pair $(\varepsilon,\delta)$, the sample complexity of OPS is the smallest integer $n$ such that there exists a behavior policy $\pi_b$ and an OPS algorithm $\cL$ such that $\cL$ is $(\varepsilon,\delta)$-sound on the OPS problem $(\cS,\cA,H,\nu,n,\cI(\pi_b))$ where $\cI(\pi_b)$ denotes the set of instances with data distribution $d_b$. 
That is, if the sample complexity is lower-bounded by a number $\mathrm{N}_{OPS}$, then, for any behavior policy $\pi_b$, there exists an MDP $M$ and a set of candidate policies $\Pi$ such that any $(\varepsilon,\delta)$-sound OPS algorithm on $(M,d_b,\Pi)$ requires at least $\mathrm{N}_{OPS}$ episodes.   

Similarly, the OPE problem for a fixed number of episodes $n$ is given by $(\cS,\cA,H,\nu,n,\cI)$. $\cI$ is a set of instances of the form $(M,d_b,\pi)$ where $M$ and $d_b$ are defined as above, and $\pi$ is a target policy. 
An OPE algorithm takes as input a batch of data $D$ and a target policy $\pi$, and outputs an estimate of the policy value. The goal is to estimate the true value accurately. 

\begin{definition}[$(\varepsilon,\delta)$-sound OPE algorithm]
    Given $\varepsilon\in(0,V_{max}/2)$ and $\delta\in(0,1)$, an OPE algorithm $\cL$ is $(\varepsilon,\delta)$-sound on instance $(M,d_b,\pi)$ if 
    \begin{align*}
        \Pr_{D \sim d_b}(|\cL(D,\pi) - J_M(\pi)| \leq \varepsilon) \geq  1- \delta.
    \end{align*}
    We say an OPE algorithm $\cL$ is $(\varepsilon,\delta)$-sound on the problem $(\cS,\cA,H,\nu,n,\cI)$ if it is sound on any instance $(M,d_b,\pi)\in\cI$. Note that $\varepsilon$ should be less than $V_{max}/2$, otherwise the bound is trivial.
\end{definition}

\subsection{OPE as Subroutine for OPS}
It is obvious that a sound OPE algorithm can be used for OPS, since we can run the sound OPE algorithm to evaluate each candidate policy and select the policy with the highest estimate.
As a result, the sample complexity of OPS is upper-bounded by the sample complexity of OPE up to a logarithmic factor. For completeness, we state this formally in the theorem below. 

\begin{restatable}[Upper bound on sample complexity of OPS]{theorem}{opsupperbound}\label{thm:upper_bound}
    Given an MDP $M$, a data distribution $d_b$, and a set of policies  $\Pi$, suppose that, for any pair $(\varepsilon,\delta)$, there exists an $(\varepsilon,\delta)$-sound OPE algorithm $\cL$ on any OPE instance $I\in\{(M,d_b,\pi):\pi\in\Pi\}$ with a sample size at most $O(\mathrm{N}_{OPE}(S,A,H,1/\varepsilon,1/\delta))$.
    Then there exists an $(\varepsilon,\delta)$-sound OPS algorithm for the OPS problem instance $(M,d_b,\Pi)$ which requires at most $O(\mathrm{N}_{OPE}(S,A,H,2/\varepsilon,|\Pi|/\delta))$ episodes. 
\end{restatable}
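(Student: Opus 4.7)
The plan is a direct reduction: given the hypothesised OPE algorithm $\cL$, I would build the OPS algorithm $\cL'$ that, on input $(D,\Pi)$, runs $\cL$ once for each $\pi\in\Pi$ on the \emph{same} dataset $D$ and returns $\hat\pi\defeq\argmax_{\pi\in\Pi}\cL(D,\pi)$. The key tuning decision is to invoke $\cL$ with accuracy parameter $\varepsilon/2$ and failure probability $\delta/|\Pi|$, so that by the hypothesis each individual call needs at most $O(\mathrm{N}_{OPE}(S,A,H,2/\varepsilon,|\Pi|/\delta))$ episodes.

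Next I would verify $(\varepsilon,\delta)$-soundness via a standard union bound plus a two-step comparison. Let $\hat J(\pi)\defeq\cL(D,\pi)$. By the OPE guarantee, for each fixed $\pi\in\Pi$ we have $|\hat J(\pi)-J_M(\pi)|\le\varepsilon/2$ with probability at least $1-\delta/|\Pi|$, so a union bound over the $|\Pi|$ candidates gives
\begin{equation*}
\Pr_{D\sim d_b}\bigl(\forall\pi\in\Pi:\ |\hat J(\pi)-J_M(\pi)|\le\varepsilon/2\bigr)\ \ge\ 1-\delta.
\end{equation*}
On this good event, writing $\pi^\dagger\in\argmax_{\pi\in\Pi}J_M(\pi)$ and using that $\hat\pi$ maximises $\hat J$ over $\Pi$,
\begin{equation*}
J_M(\hat\pi)\ \ge\ \hat J(\hat\pi)-\tfrac{\varepsilon}{2}\ \ge\ \hat J(\pi^\dagger)-\tfrac{\varepsilon}{2}\ \ge\ J_M(\pi^\dagger)-\varepsilon,
\end{equation*}
which is exactly the $(\varepsilon,\delta)$-soundness condition for OPS.

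For the sample-complexity bookkeeping, the crucial observation is that the \emph{same} batch $D$ is reused across all $|\Pi|$ calls to $\cL$; consequently the episode count does not multiply by $|\Pi|$, and the total number of trajectories needed equals the per-call requirement $O(\mathrm{N}_{OPE}(S,A,H,2/\varepsilon,|\Pi|/\delta))$. The constants $2$ and $|\Pi|$ inside the arguments correspond respectively to the $\varepsilon/2$ tightening (needed so that the two $\varepsilon/2$ slacks in the comparison sum to $\varepsilon$) and to the union-bound inflation of the failure probability.

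There is no substantive obstacle: this is essentially the evaluate-and-argmax reduction, and the only points requiring mild care are (i) allocating the accuracy budget as $\varepsilon/2$ on each side rather than $\varepsilon$, (ii) splitting the confidence as $\delta/|\Pi|$ rather than $\delta$, and (iii) emphasising that $D$ is shared so the dependence on $|\Pi|$ appears only logarithmically (through $\mathrm{N}_{OPE}$'s dependence on $1/\delta$), which is what the statement's $|\Pi|/\delta$ argument encodes.
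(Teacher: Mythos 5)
Your proposal is correct and matches the paper's own proof essentially verbatim: the same evaluate-and-argmax reduction with accuracy $\varepsilon/2$, confidence $\delta/|\Pi|$, a union bound over candidates, and the three-step comparison $J_M(\hat\pi)\ge\hat J(\hat\pi)-\varepsilon/2\ge\hat J(\pi^\dagger)-\varepsilon/2\ge J_M(\pi^\dagger)-\varepsilon$. Your added remark that the dataset $D$ is shared across calls, so the episode count does not multiply by $|\Pi|$, is a useful clarification that the paper leaves implicit.
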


In terms of the sample complexity, we have an extra $\sqrt{\log{|\Pi|}/n}$ term for OPS due to the union bound. For hyperparameter selection in practice, the size of the candidate set is often much smaller than $n$, so this extra term is negligible. However, if the set is too large, complexity regularization \citep{bartlett2002model} may need to be considered. In this paper, we consider a finite candidate set. 

\subsection{OPS is not Easier than OPE}
We have shown that OPS is sample efficient  when OPE is sample efficient. However, it remains unclear whether OPS can be sample efficient when OPE is not. 
In the following theorem, we lower bound the sample complexity of OPS by the sample complexity of OPE. 

\begin{restatable}[Lower bound on sample complexity of OPS]{theorem}{opslowerbound}           \label{thm:lower_bound}
    \sloppy
    Suppose for any data distribution $d_b$ and any pair $(\varepsilon,\delta)$ with $\varepsilon\in (0,V_{max}/2)$ and $\delta\in(0,1)$, there exists an MDP $M$ and a policy $\pi$ such that any $(\varepsilon,\delta)$-sound OPE algorithm requires at least $\Omega(\mathrm{N}_{OPE}(S,A,H,1/\varepsilon,1/\delta))$ episodes. 
    Then there exists an MDP $M'$ with $S'=S+2$, $H'=H+1$, and a set of candidate policies such that for any pair $(\varepsilon,\delta)$ with $\varepsilon\in (0,V_{max}/3)$ and $\delta\in(0,1/m)$ where $m\defeq\lceil\log(V_{max}/\varepsilon)\rceil\geq 1$, any $(\varepsilon,\delta)$-sound OPS algorithm also requires at least $\Omega(\mathrm{N}_{OPE}(S,A,H,3/2\varepsilon,1/m\delta))$ episodes. 
\end{restatable}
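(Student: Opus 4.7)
My plan is to reduce OPE to OPS, following the construction sketched in Figure~\ref{fig:proof2}. Given the OPE-hard instance $(M, d_b, \pi)$ guaranteed by the hypothesis, I build a family of OPS instances parameterized by a threshold $r \in [0, V_{max}]$ so that a sound OPS call on the instance with threshold $r$ answers a noisy version of ``is $J_M(\pi)$ above or below $r$?'', which I then chain together with a binary search to recover $J_M(\pi)$. Concretely, for each $r$ I form an MDP $M'_r$ by prepending a fresh initial state $s_0$ and adding one absorbing state $s_r$, giving $S' = S+2$ and $H' = H+1$. Action $a_1$ at $s_0$ sends the agent to $s_r$, which pays $r/H$ at every subsequent step (respecting the reward bound $r_{max} = V_{max}/H$) so the trajectory sums to exactly $r$. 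Action $a_2$ sends the agent to the original initial state of $M$, after which the original dynamics continue. The candidate set $\Pi = \{\pi_1, \pi_2\}$ contains the deterministic $\pi_1$ that plays $a_1$ and the policy $\pi_2$ that plays $a_2$ followed by $\pi$; by construction $J_{M'_r}(\pi_1) = r$ and $J_{M'_r}(\pi_2) = J_M(\pi)$. A behavior policy that mixes $a_1$ and $a_2$ with constant probability at $s_0$ and then follows $\pi_b$ yields a data distribution $d_b'$ under which the $M$-side samples have distribution $d_b$ up to a constant factor.

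\textbf{Bisection via noisy comparison.} A single $(\varepsilon,\delta)$-sound OPS call on $(M'_r, d_b', \Pi)$ returns, with probability at least $1-\delta$, a policy whose value is within $\varepsilon$ of the maximum of the two candidate values. The output is therefore a noisy threshold test: output $\pi_1$ implies $J_M(\pi) \leq r + \varepsilon$, while output $\pi_2$ implies $J_M(\pi) \geq r - \varepsilon$. Starting from $[L_0, U_0] = [0, V_{max}]$ and probing the midpoint $r_i = (L_i + U_i)/2$ with a fresh sample batch at each step, the interval length obeys $\ell_{i+1} = \ell_i/2 + \varepsilon$, which solves to $\ell_m \leq V_{max}/2^m + 2\varepsilon$. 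Choosing $m = \lceil \log(V_{max}/\varepsilon) \rceil$ forces $V_{max}/2^m \leq \varepsilon$, so the final interval has length at most $3\varepsilon$ and returning its midpoint is an OPE estimate of $J_M(\pi)$ accurate to within $3\varepsilon/2$.

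\textbf{Sample complexity.} Because each of the $m$ OPS queries uses an independent batch of size $N$, a union bound shows the composite bisection routine is a sound OPE algorithm of precision $3\varepsilon/2$, failure probability at most $m\delta$, and total episode count $mN$. Applying the hypothesized OPE lower bound to this composite estimator gives $mN \geq \Omega(\mathrm{N}_{OPE}(S,A,H,\Theta(1/\varepsilon),1/(m\delta)))$, which after rearrangement and absorbing the logarithmic factor $m$ into $\Omega$ matches the stated bound $\Omega(\mathrm{N}_{OPE}(S,A,H,3/(2\varepsilon),1/(m\delta)))$ up to the constants pinned down in the bisection analysis. The side conditions $\varepsilon \in (0, V_{max}/3)$ and $\delta \in (0, 1/m)$ are exactly what keeps the induced OPE precision below $V_{max}/2$ and the union-bounded failure below $1$, so the OPE hypothesis can legitimately be invoked.

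\textbf{Main obstacle.} The primary obstacle is wiring the OPS instance so the per-query comparison genuinely encodes the OPE question while staying inside the stated structural budget of $S+2$ states and horizon $H+1$; this requires the reference branch to have per-step reward $r/H$ so that $\pi_1$'s return can span all of $[0, V_{max}]$ without violating $r_{max}$, and requires $d_b'$ to place non-trivial mass on both $a_1$ and $a_2$ so the OPS algorithm has off-policy access to both candidates. The bisection recursion and the union bound are elementary; the more delicate point is reconciling the multi-query bisection with the theorem's ``single MDP'' phrasing, which is handled by viewing $r$ as indexing a family of structurally identical hard instances, any one of which inherits the underlying OPE hardness.
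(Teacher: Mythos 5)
Your proposal is correct and follows essentially the same route as the paper's proof: the same $M_r$ construction with two added states and a two-policy candidate set, the same interpretation of a sound OPS call as a noisy comparison of $J_M(\pi)$ against $r$, the same bisection over $m=\lceil\log(V_{max}/\varepsilon)\rceil$ thresholds with a union bound, and the same parameter bookkeeping ($\varepsilon'=2\varepsilon/3$, $\delta'=\delta/m$). The only differences are cosmetic (spreading the reference reward as $r/H$ per step rather than a single transition reward, and tracking an inflated interval length rather than the paper's $[L-\varepsilon',U+\varepsilon']$ containment), neither of which changes the argument.
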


\begin{figure*}[t]
    \centering
    \begin{subfigure}{0.48\linewidth}
        \centering
        \includegraphics[scale=0.3]{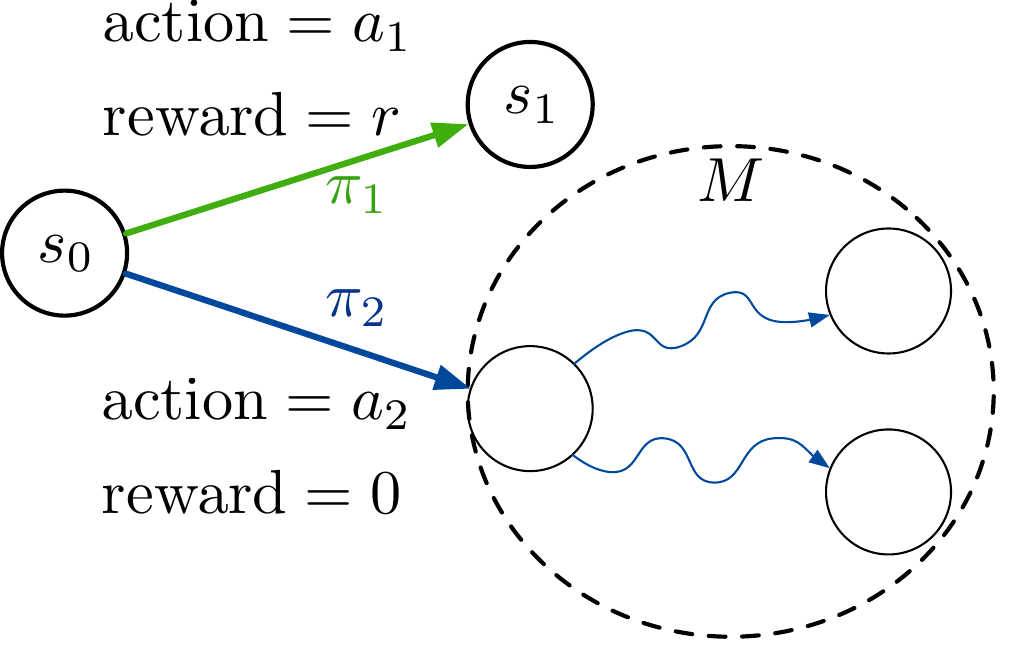}
        \caption{Constructing $M'$ and a candidate policy set}
        \Description{AAMAS.}
    \end{subfigure}
    \begin{subfigure}{0.48\linewidth}
        \centering
        \includegraphics[scale=0.3]{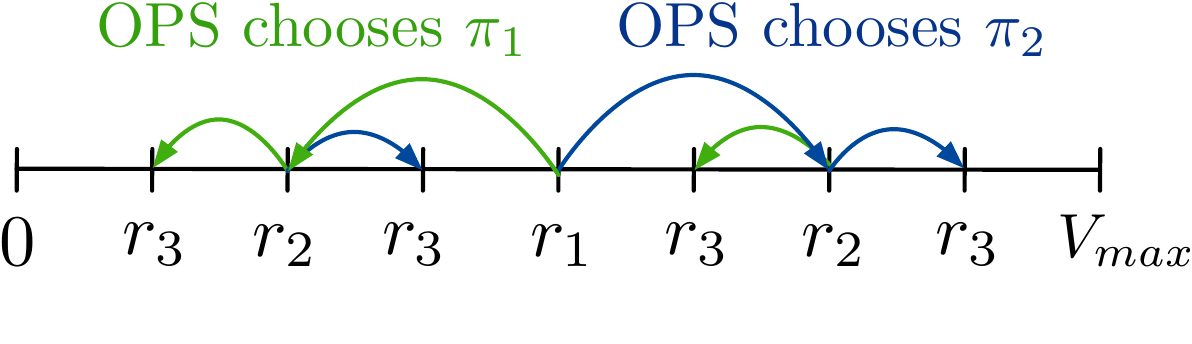}
        \caption{Searching for the true policy value}
        \Description{AAMAS.}
    \end{subfigure}
    \caption{
    Visual depiction of the reduction of OPE to OPS.
    Given a MDP $M$ and a target policy $\pi$, we can construct a new MDP $M'$ and two candidate policies $\{\pi_1,\pi_2\}$ for OPS, as shown in (a). The MDP construction was first mentioned in \citet{wang2020statistical}. $\pi_1$ chooses $a_1$ in $s_0$, which leads to a terminal state $s_1$, and can arbitrarily select actions in other states. $\pi_2$ chooses $a_2$ and is otherwise identical to the target policy $\pi$.
    Figure (b) describes the search procedure to find the policy value by calling the OPS subroutine.
    When the OPS query returns $\pi_1$, we follow the green arrow. 
    When the OPS query returns $\pi_2$, we follow the blue arrow.
     We can keep searching for the true policy value by setting $r$ for the OPS query, until the desired precision is reached. 
    }
    \label{fig:proof2}
\end{figure*}

The proof sketch is to construct an OPE algorithm that queries OPS as a subroutine, as demonstrated in Figure \ref{fig:proof2}. As a result, the sample complexity of OPS is lower bounded by the sample complexity of OPE. 
The proof can be found in the appendix.

There exist several hardness results for OPE in tabular settings and with linear function approximation \citep{yin2021optimal,wang2020statistical}. 
Theorem \ref{thm:lower_bound} implies that the same hardness results hold for OPS since lower bounds for OPE are also lower bounds for OPS. 
Theorem \ref{thm:lower_bound}, however, does not imply that OPS and OPE are always equally hard.
There are instances where OPS is easy but OPE is not. 
For example, when all policies in the candidate set all have the same value, any random policy selection is sound. 
However, OPE can still be difficult in such a scenario.

\subsection{Implication: We Need Assumptions for Sample Efficient OPS}
\label{sec:minimax}
In the previous section, we show that OPS and OPE problem are equally difficult in the worst case scenario. 
Based on the insights, we can show a lower bound on sample complexity of OPS for finite horizon finite MDPs.
Below, we present an exponential lower bound,
using the same lower bound construction from \citet{xiao2022curse} and Theorem  \ref{thm:lower_bound}.

\begin{restatable}[Lower bound on the sample complexity of OPS]{corollary}{explowerbound}      
    For any positive integers $S,A,H$ with $S>2H$ and a pair $(\varepsilon,\delta)$ with $0<\varepsilon\leq \sqrt{1/8}$, $\delta\in(0,1)$, any $(\varepsilon,\delta)$-sound OPS algorithm needs at least $\tilde\Omega(A^{H-1}/\varepsilon^2)$ episodes. 
    \label{corollary:ope_lower_bound}
\end{restatable}
 
These result suggests that we need to consider additional assumptions on the environment, the data distribution, or the candidate set to achieve sample efficient OPS.
Without such assumptions, any sound OPS algorithm would suffer exponential sample complexity in the worst case. 
One direction is to consider when OPE can be sample efficient, since we can leverage Theorem \ref{thm:upper_bound} to inherit the sample complexity result from OPE to OPS.
There is a wealth of literature on OPE, in particular, FQE and MIS (or DICE) methods have been shown to be effective for OPS empirically \citep{paine2020hyperparameter,yang2022offline}. FQE has also been shown to be sample efficient under a standard data coverage assumption, that is, data coverage for all the candidate policies, and a function approximation assumption.

\section{Bellman Error Selection for OPS}
\label{sec:BE}
The natural alternative to OPE is to use Bellman errors (BE) or value error to select the best value functions \citep{farahmand2011model}, however, there is mixed evidence on whether such approaches are effective.
\citet{tang2021model} empirically show that selecting the candidate value function with the smallest TD errors perform poorly. Similarly, \citet{paine2020hyperparameter} present positive experimental results using FQE for OPS but conclude that the use of TD errors was ineffective. Other works, however, have developed new OPS algorithms that rely on the use of BE.
\citet{zhang2021towards} propose a value-function selection algorithm called BVFT, which computes the (empirical) projected BE for each pair of candidate value functions. 
\citet{lee2022oracle} provide a method for selecting the best function class from a nested set of function classes for Fitted Q-Iteration. 
These theoretically-sound methods, however, either rely on strong assumptions or are applicable only in specialized settings. It is unclear whether these methods are better than OPE. 

In this section, we first show conditions when BE can be used OPS, which are generally stricter than the conditions needed for OPE methods. This suggests that, in theory, there is no benefit of using BE for OPS.
On the other hand, BE offers a significant advantage in practice---we have a simple way to do model selection. 
As a demonstration, we propose a BE selection method that uses cross-validation for model selection.

\subsection{BE Selection Problem}
The selection problem using BE is slightly different from the OPS problem we defined in the previous section. 
Suppose we are given a set of candidate value functions $\cQ \defeq \{q_1,\dots,q_K\}$ and let $\Pi=\{\pi_1,\dots,\pi_K\}$ be the set of corresponding (greedy) policies, a common strategy is to select the action value function with the smallest BE \citep{farahmand2011model}.
In the finite horizon setting, the Bellman error with respect to $q_i$ is defined as 
\begin{align*}
    \mathcal{E}(q_i) \defeq \frac{1}{H} \sum_{h=0}^{H-1} \norm{q_{i} - \bellman q_{i}}_{2,\mu_h}^2
\end{align*} 
where $\norm{q}_{p,\mu_h} \defeq (\sum_{(s,a)\in\cS_h\times\cA} \mu_h(s,a) |q(s,a)|^p)^{1/p}$.
We define $(\varepsilon,\delta)$-sound \emph{BE selection} in the following. 

\begin{definition}[$(\varepsilon,\delta)$-sound BE selection]
    Given a set of candidate value functions $\cQ$, $\varepsilon>0$ and $\delta\in(0,1)$, an BE selection algorithm $\cL$, which takes $D,\cQ$ as input and outputs $q\in\cQ$, is $(\varepsilon,\delta)$-sound on $\cQ$ if $\mathcal{E}(\cL(D,\cQ)) \leq \min_{i=1,\dots,|\cQ|} \mathcal{E}(q_i) + \varepsilon$
    with probability at least $1-\delta$. 
\end{definition}

In order to connect the BE selection problem to the OPS problem in Definition~\ref{def:ops}, we need the following assumptions:
\begin{enumerate}
    \item (date coverage)\\ $\exists C$ such that $\forall\pi\in\Pi \cup \{\pi^*\}$,  $\underset{h\in [H]}{\max}  \underset{s\in\cS_h,a\in\cA_h}{\max} \frac{d_h^\pi(s,a)}{\mu_h(s,a)} \leq C$,
    \item (suboptimality of the candidate set)  $\underset{q\in\cQ}{\min} \ \mathcal{E}(q) \leq \varepsilon_{sub}$.
\end{enumerate}

With these assumptions, an $(\varepsilon_{est},\delta)$-sound BE selection algorithm $\cL$ on $\cQ$ is $(2 H \sqrt{C(\varepsilon_{sub}+\varepsilon_{est})},\delta)$-sound for the OPS problem on $\Pi$.
The bound can be easily derived based on existing results \citep{duan2021risk,xie2020q}.
We present the result here for clarity and include the complete theorem and the proof in the appendix.

Compared to OPE methods such as FQE, we need data coverage for both the candidate policies $\Pi$ and an optimal policy. 
We also have an additional error $\varepsilon_{sub}$ that does not go to zero even when we can collect more samples for OPS. Only $\varepsilon_{est}$ goes to zero as $n$ goes to infinity.

To see how poor the guarantee can be due to $\varepsilon_{sub}$, suppose we have two action values $q_1=100 q^*$ and $q_2=q^*+\text{some random noise}$, then $q_1$ has a large Bellman error but $\pi_1$ is actually optimal. 
We would choose $q_2$ since it has a lower Bellman error, even when we can collect an infinite number of samples. 
To obtain a meaningful guarantee, we need to include another value function, for example, $q_3=q^*$ to make $\varepsilon_{sub}=0$. As we collect more samples, we can estimate the Bellman error more accurately and choose $q_3$ eventually.

\subsection{A Sound BE Selection Algorithm with Cross-Validation}
\label{sec:be_efficient}

If the data coverage and suboptimality of the candidate set assumptions hold, the remaining part is to have a sound BE selection algorithm. 
In this subsection, we show that there is a simple BE selection method, which we call Identifiable BE Selection (IBES). 

In deterministic environments, the BE can be easily estimated using TD errors. Given a state-action value $q$ with $v_q(s) \defeq \max_{a} q(s,a)$ the corresponding state value, the BE estimate is 
\begin{equation*}
    \mathrm{TDE}(q) \defeq \frac{1}{|D|}\sum_{(s,a,s',r) \in D} (q(s,a) - r - v_q(s'))^2.
\end{equation*}

In stochastic environments, estimating BE  typically involves an additional regression problem \citep{antos2008learning}.
\citet{antos2008learning} propose an estimator for the BE by introducing an auxiliary function $g\in\cG$ where $\cG$ is a function class: 
\begin{align}
    &\hat \cE(q) 
    = \max_{g \in \cG} [\nonumber\\ 
    & \tfrac{1}{|D|} \!\!\!\sum_{(s,a,s',r) \in D}\!\!\! (q(s,a) - r - v_q(s'))^2 - \tfrac{1}{|D|} \!\!\!\sum_{(s,a,s',r) \in D}\!\!\! (g(s,a) - r - v_q(s'))^2].
    \label{eq:BEE}
\end{align}

We can rewrite the inner term in the equation by the change of variable  \citep{dai2018sbeed,patterson2021generalized}. Let $t(r,s') \defeq r+v_q(s')$ be the target and $\delta(s,a,r,s') \defeq t(r,s')-q(s,a)$ be the TD error to simplify the equations. Consider a new auxiliary function $h(s,a)=g(s,a)-q(s,a)$, then the inner term in Equation \eqref{eq:BEE} is
\begin{align}
    &\tfrac{1}{|D|} \sum_{(s,a,s',r) \in D}\left[ (q(s,a) - r - v_q(s'))^2 - (g(s,a) - r - v_q(s'))^2\right] \nonumber\\
    &= \tfrac{1}{|D|} \sum_{(s,a,s',r) \in D} \left[(t(r,s')-q(s,a))^2 - (t(r,s')-g(s,a))^2\right] \nonumber\\
    &= \tfrac{1}{|D|} \sum_{(s,a,s',r) \in D} \left[2h(s,a)\delta(s,a,r,s') - h(s,a)^2\right].
    \label{eq:IBE}
\end{align}
That is, we can also use an auxiliary function $h$ to predict $\bellman q - q$, instead of $\bellman q$, and use the auxiliary function to estimate the Bellman error.
The benefit is that the Bellman errors are more likely to be predictable. If there is an action-value function $q\in\cQ$ with a small BE, the Bellman errors are nearly zero everywhere and any reasonable function class are able to represent the solution. 

The remaining part is to find a function class that has low approximation error and a low statistical complexity. 
Fortunately, we can perform model selection to choose the function approximation $\cG$.
This is because we are running regression with fixed targets in Eq (\ref{eq:BEE}), and model selection for regression is well-studied. 
For example, 
consider a finite set of potential function classes $\cG_1,\dots,\cG_M$, we can use a holdout validation set or cross-validation to select the best function class and other hyperparameters.
Therefore, we can choose a function class such that it has a low approximation error and a small complexity measure, which can potentially result in improved sample efficiency. 
We describe the full procedure of IBES in Algorithm \ref{alg:be}.

\begin{algorithm}[t]
\caption{Identifiable BE Selection (IBES) with holdout validation}\label{alg:be}
\begin{algorithmic}
\State \textbf{Inputs}: Candidate set $\cQ$, training data $D$, validation data $D_{val}$
\State \textbf{Hyperparameters}: A set of function classes $\cG_1,\dots,\cG_M$ for model selection
\State Let $\delta(s,a,r,s')\defeq r + v_q(s') - q(s,a)$
\For{$q \in \cQ$}
    \For{$m=1,\dots,M$}
        \State Perform regression: 
        \State $\hat g_m \gets \min_{g\in\cG_{m}} \frac{1}{|D|} \sum_D (g(s,a) - \delta(s,a,r,s'))^2$
        \State Compute validation error: 
        \State $l(\hat g_m) \gets \frac{1}{|D_{val}|} \sum_{D_{val}} (\hat g_m(s,a) - \delta(s,a,r,s'))^2$
    \EndFor
    \State Find the best function class: $k \gets \argmin_{m=1,\dots,M} l(\hat g_m)$
    \State Estimate the Bellman error for $q$: 
    \State $\mathrm{BE}(q) \gets \frac{1}{|D|}\sum_D 2 \hat g_k(s,a)\delta(s,a,r,s') - \hat g_k(s,a)^2$
\EndFor
\State Output: $q^\dagger \gets \arg\min_{q\in\cQ} \mathrm{BE}(q)$
\end{algorithmic}
\end{algorithm}

\subsection{Comparison to Existing Methods}
There are other works consider selecting a value function that has the smallest BE or is the closest to the optimal value function. 
\citet{farahmand2011model} consider selecting a value function such that, with high probability, the output value functions has the smallest BE. 
They propose to fit a regression model $\tilde q_i$ to predict $\bellman q_i$ and bound the BE by $\norm{q_i - \tilde q_i}_{2,\mu}^2 + b_i$ where the first term can be viewed as the (empirical) projected Bellman error, and the second term $b_i$ is a high-probability upper bound on the excess risk of the regression model, which is assumed to be given.  
There is a related work on using BE selection method for OPS \citep{zitovsky2023revisiting}.
They propose a method called Supervised Bellman Validation (SBV), which is effectively an empirical version of the method from \citet{farahmand2011model}, without an additional upper bound on the excess risk. 
In our experiments, we find that our method outper  forms SBV in terms of sample efficiency, likely because the auxiliary function $h$ is used to predict the Bellman error, instead of $\bellman q$. 

\citet{zhang2021towards} propose to use a (empirical) projected Bellman error, called BVFT loss, with piecewise constant function classes. Their selection algorithm chooses the value function with the smallest BVFT loss, assuming $q^*$ is in the candidate set (approximately) and a stronger data assumption is satisfied. 
Interestingly, this condition on having $q^*$ is essentially equivalent to our condition requiring small $\varepsilon_{sub}$, since $q^*$ has exactly zero BE. 
The algorithms, though, are quite different from ours.
Their method is computationally expensive since it scales with $O(|\Pi|^2)$ instead of $O(|\Pi|)$, making the method impractical when the candidate set is not small.
Our method requires a weaker data coverage assumption and the computation cost scales linearly with $O(|\Pi|)$. 
Note that the sample complexity of our methods depends on the function class that is used to perform the regression. BVFT can be viewed as using the piecewise constant function classes to fit the Bellman target, the sample complexity depends on the piecewise constant function classes, which is measured by the number of discretization bins $(V_{max}/\varepsilon_{dct})^2$ and $\varepsilon_{dct}$ is the discretization resolution.

\subsection{Comparison between BE and OPE for OPS}
\label{sec:pro_con}

Table \ref{table1} summarizes the comparison of BE and OPE methods.
In theory, OPE methods such as FQE require weaker assumptions compared to IBES and BVFT. One of the strict assumptions for IBES and BVFT is that the candidate policy set contains a nearly optimal action value function, which we often cannot verify in the offline setting. 
This suggests that OPE is a more robust and reliable method for OPS.

On the other hand, if we satisfy these stronger assumptions, then IBES has several advantages. 
IBES can be much more sample efficient in deterministic environments, or even in stochastic environments by choosing appropriate function approximators. 
For IBES, we are not plagued by the issue of having hard-to-specify hyperparameters.
This is critical for the offline setting, where we cannot test different hyperparameter choices in the environment.

\begin{table}[h]
    \caption{A summary of OPS methods. The first two methods are OPE methods and the last two methods are BE-based methods.}
    \centering
    \begin{tabular}{l l l}
      \bfseries Method & \bfseries Data Coverage & \bfseries Suboptimality \\
      \midrule
      IS & Action coverage & No assumption \\
      FQE & Data coverage for $\Pi$ & No assumption \\
      IBES,SBV & Data coverage for $\Pi \cup \{\pi^*\}$ & $\underset{q\in\cQ}{\min} \ \mathcal{E}(q) \leq \varepsilon_{sub}$ \\
      BVFT & Stronger data coverage \citep{zhang2021towards} & $q^* \in \cQ$
    \end{tabular}
    \label{table1}
\end{table}

\section{Experiments}

In this section, we empirically investigate the different BE methods as well as FQE for OPS. The goal in the experiments is to gain a better understanding of how IBES and FQE perform when we vary important factors such as data coverage, sample size, and candidate policies. 
We first compare different BE-based methods to show the advantage of our proposed method IBES, and investigate the differences between IBES and FQE in classic control environments where we vary the data coverage. 
Finally, we perform an experiment on the Atari offline benchmark dataset.

To evaluate the performance of OPS, we consider the normalized top-$k$ regret used in \citet{zhang2021towards}. 
Top-$k$ regret is the gap between the best policy within the top-$k$ policies, and the best policy among all candidate policies.
We then normalize the regret by the difference between the best and the worst policy, so the normalized regret is between $0$ and $1$. 
The normalized regret can be interpreted as the percentage of degradation if we use offline selection instead of online selection, since online selection with Monte Carlo evaluation can always achieve a regret of zero. 
OPS corresponds to $k=1$; for most results we use $k=1$, but include some results for $k > 1$. 
All hyperparameters for OPS methods are selected using only the datasets, not by peaking at performance on the real environment; for more details, see the appendix.

\subsection{Comparison between BE-based Methods}

In our first set of experiments, we compare different BE-based methods for OPS. We conduct experiments on two standard RL environments: Acrobot and Cartpole. 
We also include the stochastic versions of these environments with sticky actions \citep{machado2018revisiting}, which we call Stochastic Acrobot and Stochastic Cartpole.
We generate a set of candidate policy-value pairs by running CQL with different hyperparameters on a batch of data collected by a trained policy with random actions taken $40\%$ of the time, and generate datasets for OPS that provide good data coverage. 
We then use either FQE or IBES for OPS. 
We also included SBV \citep{zitovsky2023revisiting} and BVFT \citep{zhang2021towards},\footnote{We modified the BVFT implementation from the author of \citet{zhang2021towards} (\url{https://github.com/jasonzhang929/BVFT_empirical_experiments/}).} and a random selection baseline which selects a policy uniformly at random from the candidate set.

\begin{figure*}[t]
    \centering
    \includegraphics[width=0.8\textwidth]{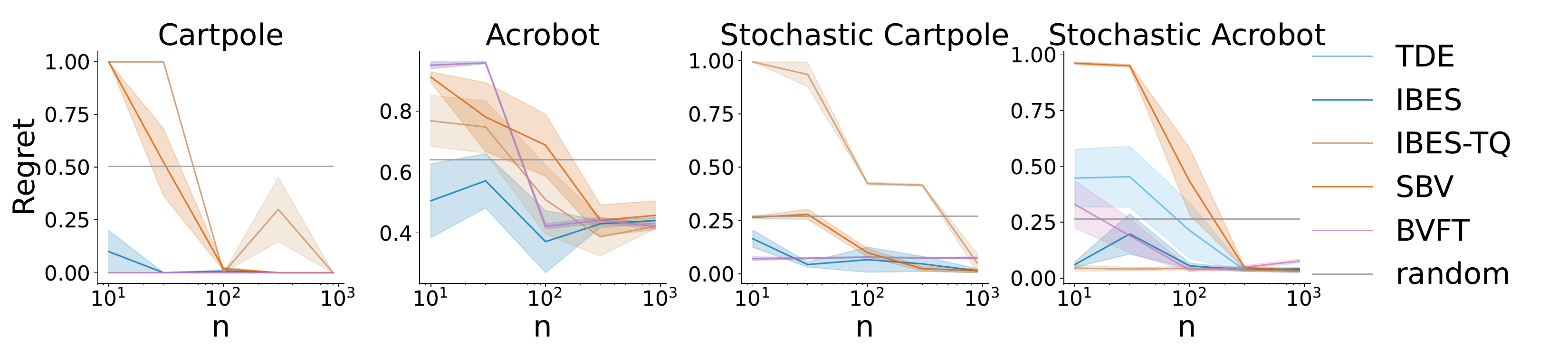}
    \caption{Comparison between BE methods. The figure shows the normalized top-$1$ regret with varying sample size, averaged over 10 runs with one standard error.
    IBES consistently achieves the lowest regret across environments. 
    }
    \Description{AAMAS.}
    \label{fig:exp_target}
\end{figure*}

\begin{figure*}[t]
    \centering
    \includegraphics[width=0.8\textwidth]{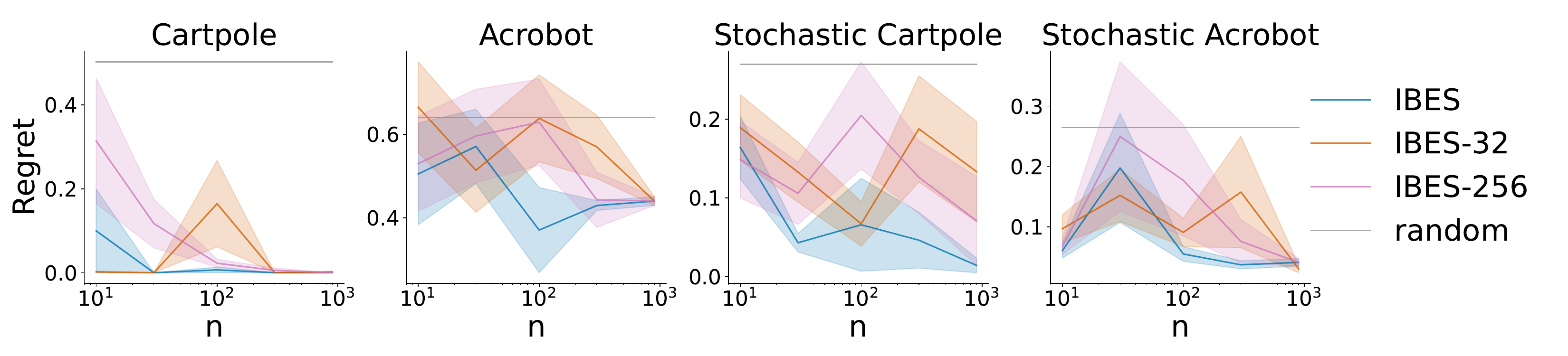}
    \caption{Comparison to BE with a fixed number of hidden units. 
    The figure shows the normalized top-$1$ regret averaged over 10 runs with one standard error.
    IBES with model selection consistently achieves the lowest regret across environments. 
    }
    \Description{AAMAS.}
    \label{fig:be}
\end{figure*}

We first compare different BE methods, and investigate the effect of using the Bellman error $\bellman q-q$ as the target. 
We include a baseline that uses $\bellman q$ as the target, called IBES-TQ. 
Note that we also perform model selection for SBV and IBES-TQ, similar to IBES.
In Figure \ref{fig:exp_target}, we can see all BE methods converge to the same performance as the sample size grows larger, but IBES using the Bellman error as target is much more sample efficient than IBES-TQ and SBV across all environment.
This is likely due to the fact that Bellman error is easier to predict so using a smaller neural network is sufficient and has a better sample efficiency.
We also found BVFT performs similar to TDE (which are overlapping in Cartpole, Acrobot and Stochastic Cartpole), which is likely due to the fact that BVFT with small discretization is equivalent to TDE.

Now we show that the use of validation for model selection in IBES is important.
To do so, we compare IBES (Algorithm 1) which takes as input a set of model classes that varies the number of hidden units to IBES with only a single model class where the network has a fixed number of hidden units.
In our experiment, we use a two layer neural network model as the function approximation, and perform model selection to find the number of hidden units from the set $\{32,64,128,256\}$ for IBES.
In Figure \ref{fig:be}, we include two baselines called IBES-32 and IBES-256, which has the hidden units of size 32 and 256 respectively. 
In general, IBES with model selection achieves a regret less than or equal to that of IBES with a fixed number of hidden units across all environments. 
The performance of IBES with model selection and IBES-256 converges as the sample size gets larger. This result shows an improvement in terms of sample efficiency due to model selection. 
In stochastic environments, IBES with a small number of hidden units does not work well even with a large sample size. 
The results suggest that the ability to perform model selection to balance approximation and estimation error is important to improve sample efficiency while achieving low regret. 

In summary, the experiment suggests the proposed BE method is more sample efficient, due to the fact that the method predicts the Bellman error, and the model selection procedure. 

\subsection{Comparison between FQE and IBES under Different Data Coverage}
In the second set of experiments, we aim to compare BE methods to OPE methods for model selection.
Specifically, we compare IBES to FQE under different degrees of data coverage and different sample sizes.
We design two different datasets for the experiments: (a) well-covered data is generated such that all candidate policies are well-covered, and (b) well-covered data with optimal trajectories includes more diverse trajectories collected by an $\varepsilon$-greedy optimal policy (which used in the previous experiment).

\begin{figure*}[t]
    \centering
    \includegraphics[width=0.73\textwidth]{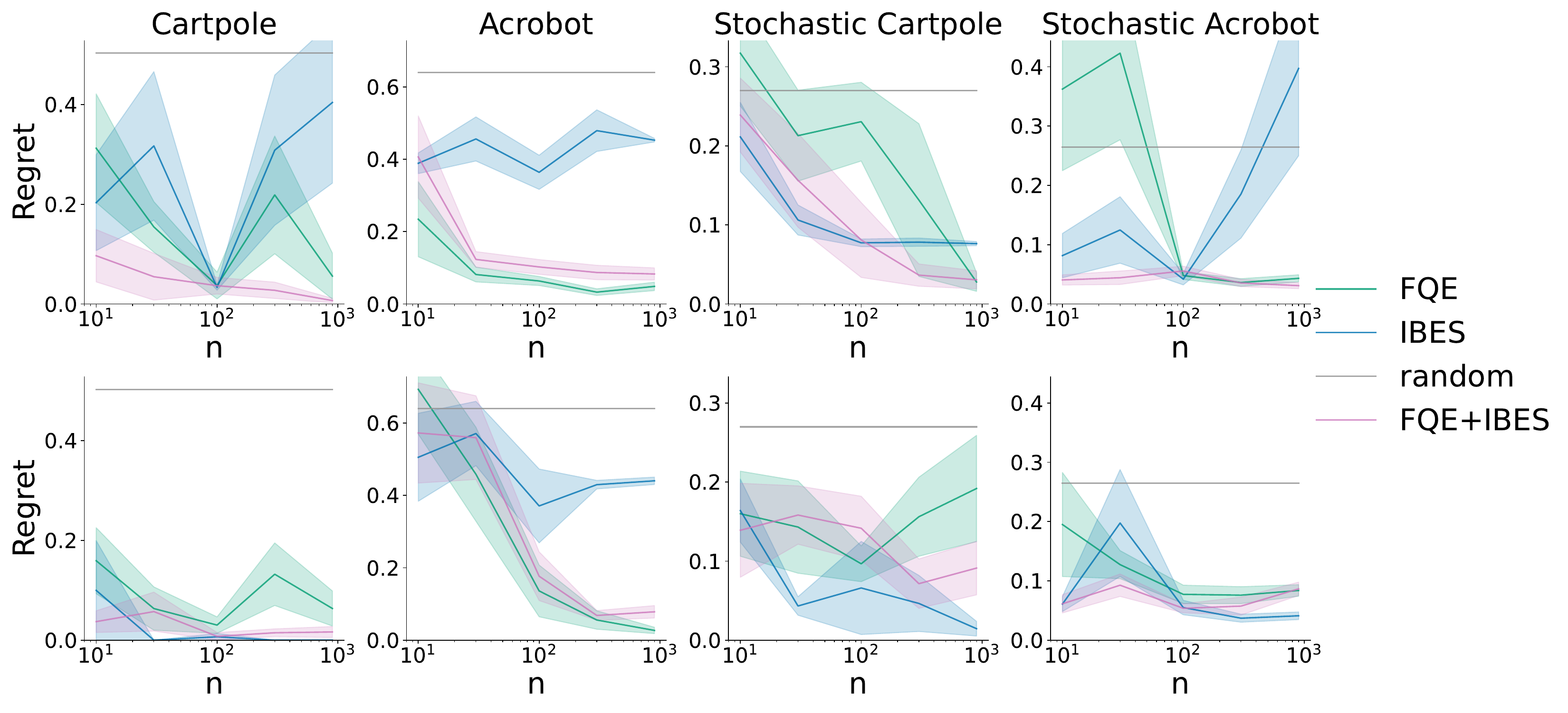}
    \caption{Comparison between IBES and FQE under different sample size and data distributions. 
    The figure shows the normalized top-$1$ regret with varying sample size, averaged over 10 runs with one standard error.
    }
    \Description{AAMAS.}
    \label{fig:exp}
\end{figure*}

Figure \ref{fig:exp} shows the results for top-$1$ regret with varying numbers of episodes where the top row uses dataset (a) and the bottom row uses dataset (b).
We first focus on the asymptotic performance ($n \approx 10^3$) across  different datasets. 
FQE performs very well with a small regret on well-covered and diverse data.
IBES performs better with optimal trajectories, especially in Cartpole. This result matches our theoretical result that IBES requires a stronger data coverage for an optimal policy. 
Moreover, IBES often performs better than FQE with a small sample size, suggesting that it could offer a slightly better sample efficiency than FQE. 

Investigating the results deeper, we observed that when IBES does not perform well (Acrobot with optimal trajectories), it is often the case that one of the value functions has the smallest Bellman error but it is not far from optimal. When FQE does not perform well (Stochastic Cartpole with optimal trajectories), it is often the case that one of the candidate policies is highly overestimated. Therefore, we include a simple two-stage method that first uses FQE to select $k_1$ policies, then use IBES to find the top-$k_2$ policies amongst the $k_1$ selected policies.
We set $k_1=10$ and $k_2=1$ in our experiment. The idea is similar to the two-stage method proposed in \citet{tang2021model}. 
We label the two-stage method as FQE+IBES in the figure. We can see that it performs consistently well, better than either method alone. 
We hypothesize the explanation is that FQE usually performs very well for top-$k$ selection with a large $k$, even though top-$1$ regret might be bad. 
We then use IBES to select from a subset of reasonably good candidate policies, where the candidate with the smallest error is more likely to be optimal. 
Further investigation about combining multiple OPS methods might be a promising research direction. 

\subsection{Comparison between FQE and IBES on Atari Datasets}
Finally, we conduct experiments on benchmark Atari datasets to show the hardness of OPS for offline RL, and compare IBES to FQE in a more practical setting. 
We use the offline data on Breakout and Seaquest from the DQN replay dataset\footnote{\url{https://research.google/resources/datasets/dqn-replay/}}, a commonly used benchmark in offline RL. 
We sample $1$ million transitions from different learning stages to promote data coverage. 
Unlike our previous experiment, the data coverage might be poor due to the absent of explicit exploration to cover all candidate policies.
We use $50\%$ of the data to generate a set of candidate policy-value pairs by running CQL with different number of gradient steps and different regularization coefficients, as specified in \citet{kumar2020conservative}. We use the other $50\%$ data to perform OPS using FQE or IBES. 
We generate two candidate sets for each environment. The early-learning candidate contains policies with gradient steps in $\{50k, \cdots, 500k\}$ and final-learning candidate contains policies with gradient steps in $\{550k, \dots, 1000k\}$.
In this experiment, we use the CQL and FQE implementation from the d3rlpy library \citep{d3rlpy}. 

\begin{figure}[t]
    \centering
    \includegraphics[scale=0.14]{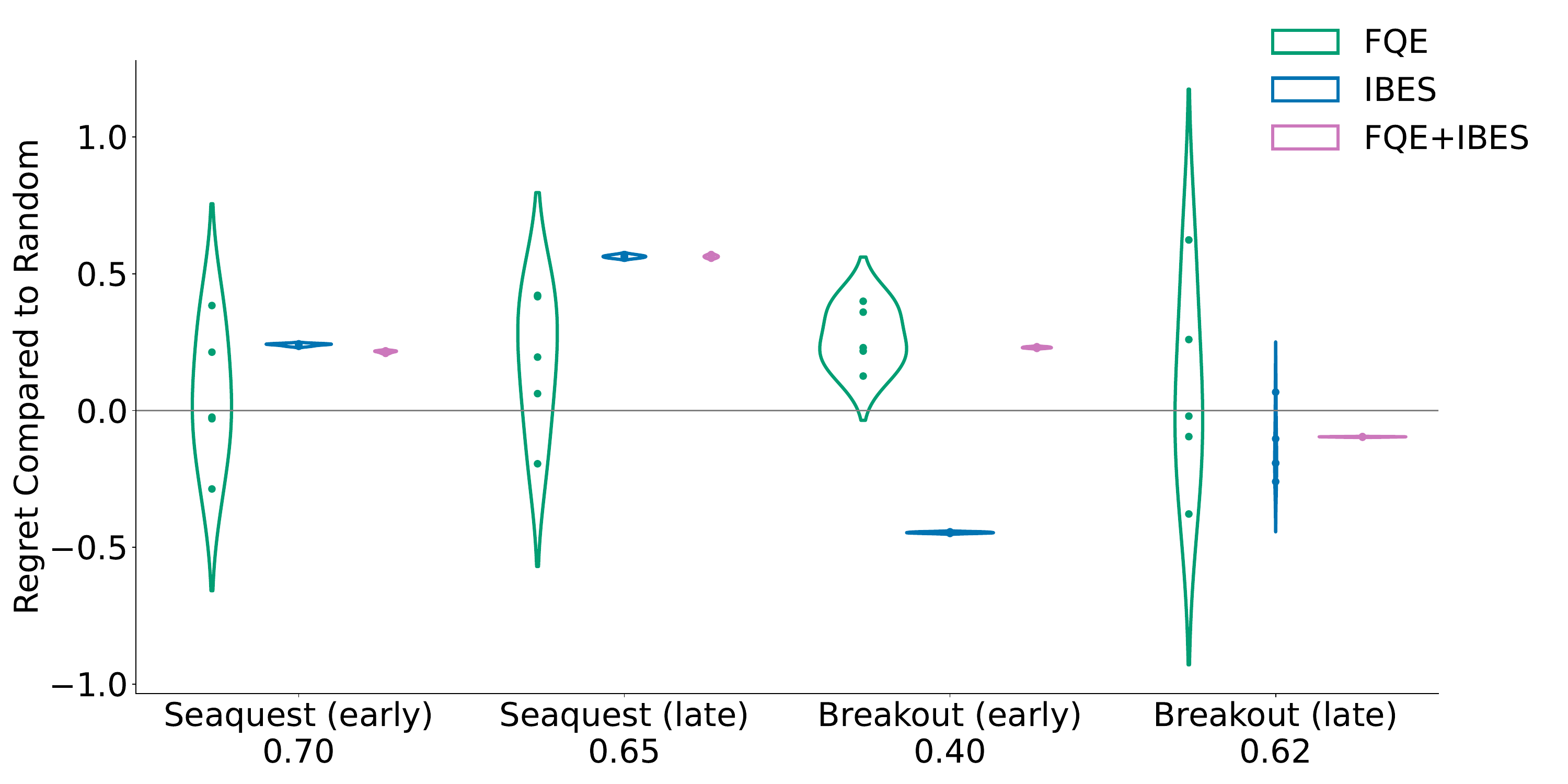}
    \caption{Regret improvement over the random baseline on the Atari dataset. 
    We show the regret compared to the random baseline. A positive value means the method outperforms random selection, and a negative value means the method performs worse than random selection. 
    We show the distribution of the regret improvement across 5 random seeds (each seed is a single point in the violin plots) and the regret of the random baseline under the x-axis labels.  
    None of these methods can consistently outperform the random baseline, showing the hardness of OPS. }
    \Description{AAMAS.}
    \label{fig:regret}
\end{figure}

Figure \ref{fig:regret} shows the top-$1$ regret. 
We can see that these none of the OPS methods can consistently outperform the random baseline, showing the hardness of OPS in a practical situation where we can not control the data coverage. 
For Breakout with early-learning candidate set, we found that IBES tends to pick the candidate value function with a small number of gradient steps. This is likely due to the fact that none of the candidate value functions are close to optimal (that is, large $\varepsilon_{sub}$) and the value function with a small number of training steps has a small magnitude and hence a small estimated Bellman error. 
IBES performs better with the candidate set containing policies in the final learning stage, where it is more likely to contain a value function that is close to optimal. 
This shows the limitation of IBES, as discussed in the previous section. 
We can also see that FQE is more robust to the choice candidate set since it does not require one of the value function being optimal, as long as all candidate set are covered by the data. However, it is more sensitive to the data used to run FQE, and hence high variance across different random seeds. 

\section{Conclusion}
In this paper, we made contributions towards understanding when OPS is feasible for RL.
One of our main results that the sample complexity of OPS is lower-bounded by the sample complexity of OPE implies that without satisfying conditions to make OPE feasible, we cannot do policy selection efficiently. 
Our second contribution is the proposed IBES algorithm, and we empirically show that it is more sample efficient than existing BE-based methods for OPS.   

We expect active research topics will be (1) to identify suitable conditions on the policies, environments and data, to make OPS sample efficient, (2) to design offline RL algorithms that have sound methods to select their own hyperparameters, and (3) to investigate how to combine multiple methods for a better OPS algorithm. 
In offline RL, we cannot select hyperparameters by testing in the real-world, and instead are limited to using the offline data. OPS is arguably one of the most critical steps towards bringing RL into the real-world, and there is much more to understand. 



\begin{acks}
This research was supported in part by the Natural Sciences and Engineering Research Council of Canada (NSERC), the Canada CIFAR AI Chair Program, and a Canada Research Chair in Reinforcement Learning. Prabhat Nagarajan has been additionally supported by the Alberta Innovates Graduate Student Scholarship. Computational resources were provided in part by the Digital Research Alliance of Canada.
\end{acks}



\bibliographystyle{ACM-Reference-Format} 
\bibliography{main}


\newpage
\onecolumn
\appendix

\section{Related Work}
In this section we provide a more comprehensive survey of prior work on model selection for RL. 
In the online setting, model selection has been studied extensively across contextual bandits \citep{foster2019model} to RL \citep{lee2021online}.
In the online setting, the goal is to select model classes while balancing exploration and exploitation to achieve low regret, which is very different from the offline setting where no exploration is performed. 

In the offline setting, aside from OPE and BE selection, other work on model selection in RL is in other settings: selecting amongst models and selecting amongst OPE estimators.
\citet{hallak2013model} consider model selection for model-based RL algorithms with batch data. 
They focus on selecting the most suitable model that generates the observed data, based on the maximum likelihood framework.
\citet{su2020adaptive} consider adaptive estimation for OPE when the candidate estimators can be ordered with monotonically increasing biases and decreasing confidence intervals. 

In offline RL pipelines, we often split the offline data into training data for generating multiple candidate policies and validation data for selecting the best candidate policy.
\citet{nie2022data} highlight the utility of performing multiple random data splits for OPS.
They do not study the hardness or sample complexity of this procedure.

To the best of our knowledge, there is no previous work on understanding the fundamental limits for OPS in RL. There is one related work in the batch contextual bandit setting, studying the selection of a linear model \citep{lee2022model}.
They provide a hardness result suggesting it is impossible to achieve an oracle inequality that balances the approximation error, the complexity of the function class, and data coverage. 
Our work considers the more general problem of selecting a policy from a set of candidate policies in the RL setting.

\section{Technical Details} 
\label{sec:proof}

\subsection{Proof of Theorem \ref{thm:upper_bound}} \label{sec:proof_upper_bound}

\opsupperbound*
\begin{proof}
    The OPS algorithm $\cL(D,\Pi)$ for a given $(\varepsilon,\delta)$ works as follows: we query an $(\varepsilon',\delta')$-sound OPE algorithm for each policy in $\Pi$ and select the policy with the highest estimated value. That is, $\cL(D, \Pi)$ outputs the policy $\bar{\pi} \defeq \argmax_{\pi \in \Pi} \hat{J} (\pi)$, where $\hat{J}(\pi)$ is the value estimate for policy $\pi$ by the $(\varepsilon', \delta')$-sound OPE algorithm with data $D$.
    
    By definition of an $(\varepsilon', \delta')$-sound OPE algorithm we have 
    \begin{align*}
        &\Pr_{D \sim d_b} (|\hat{J}(\pi) - J(\pi)| \le \varepsilon') \geq 1 - \delta',\forall \pi \in \Pi.
    \end{align*}
    Applying the union bound, we have
    \begin{align*}
        \Pr_{D \sim d_b} (\forall{\pi \in \Pi}, |\hat{J}(\pi) - J(\pi)| \le \varepsilon') & \ge 1 - \delta' |\Pi|.
    \end{align*}
    
    Let $\pi^{\dagger}$ denote the best policy in the candidate set $\Pi$, that is, $\pi^{\dagger} \defeq \argmax_{\pi \in \Pi} J(\pi)$.
    With probability $1 - \delta' |\Pi|$, we have 
    \begin{align*}
        J(\bar{\pi}) & \ge \hat{J}(\bar{\pi})- \epsilon'
        \ge \hat{J}(\pi^{\dagger}) - \varepsilon'
        \ge J(\pi^{\dagger}) - 2\varepsilon'.
    \end{align*}
    The second inequality follows from the definition of $\bar{\pi}$.
    Finally, by setting $\delta' = \delta/|\Pi|$ and $\varepsilon'=\varepsilon/2$, we get
    \begin{align*}
        \Pr_{D \sim d_b} (J(\bar{\pi}) \ge J(\pi^{\dagger}) - \varepsilon) \ge 1 - \delta.
    \end{align*}
    That is, $\cL$ is an $(\varepsilon,\delta)$-sound OPS algorithm, and it requires $O(\mathrm{N}_{OPE}(S,A,H,2/\varepsilon,|\Pi|/\delta))$ episodes. 
\end{proof}

\subsection{Proof of Theorem \ref{thm:lower_bound}} \label{sec:proof_lower_bound}

\opslowerbound*

\begin{proof}
    Our goal is to construct an $(\varepsilon,\delta)$-sound OPE algorithm with $\delta \in (0,1)$ and $\varepsilon \in [0, V_{max}/2]$. 
    To evaluate any policy $\pi$ in $M$ with dataset $D$ sampled from $d_b$, we first construct a new MDP $M_r$ with two additional states: an initial state $s_0$ and a terminal state $s_1$. 
    Taking $a_1$ at $s_0$ transitions to $s_1$ with reward $r$. Taking $a_2$ at $s_0$ transitions to the initial state in the \textit{original} MDP $M$. See Figure \ref{fig:proof2} in the main paper for visualization. 
    
    Let $\Pi=\{\pi_1,\pi_2\}$ be the candidate set in $M_r$ where $\pi_1(s_0) = a_1$ and $\pi_2(s_0) = a_2$ and $\pi_2(a|s)=\pi(a|s)$ for all $(s,a) \in \cS\times\cA$. 
    Since $\pi_1$ always transitions to $s_1$, it never transitions to states in MDP $M$. 
    Therefore, $\pi_1$ can be arbitrary for  all $(s,a) \in \cS\times\cA$. 
    We can add any number of transitions $(s_0,a_1,r,s)$ and $(s_0,a_2,0,s)$ in $D$ to construct the dataset $D_r$ with distribution $d_{b,r}$ arbitrarily.
    
    Suppose we have an $(\varepsilon',\delta')$-sound OPS algorithm, where we set 
    $\varepsilon' = 2\varepsilon/3$ and
    $\delta' = \delta / m$ with $m\defeq\lceil\log(V_{max}/\varepsilon')\rceil$. 
    Note that if this assumption does not hold, then it directly implies that the sample complexity of OPS is larger than $\Omega(\mathrm{N}_{OPE}(S,A,H,1/\varepsilon,1/\delta))$.
    Our strategy will be to iteratively set the reward $r$ of MDP $M_r$ and run our sound OPS algorithm on $\Pi$ and using bisection search to estimate a precise interval for $J(\pi)$. 

    The process is as follows. 
    By construction, our OPS algorithm will output either $\pi_1$, which has value $J_{M_r}(\pi_1) = r$, or output $\pi_2$, which has value $J_{M_r}(\pi_2)=J_{M}(\pi)$. 
    That is, it has the same value as $\pi$ in the original MDP. 
    Let us consider the following two cases. 
    Let $\pi^{\dagger}$ be the best policy in $\Pi$ for MDP $M_r$.
    
    \textbf{Case 1: the OPS algorithm selects $\pi_1$.}
    We know, by definition of a sound OPS algorithm, that
    \begin{align*}
        & \Pr (J_{M_r}(\pi_1) \ge J_{M_r}(\pi^{\dagger}) - \varepsilon') \ge 1 - \delta' \\
        \implies & \Pr (r \ge \text{max}(r, J_{M_r}(\pi_2)) - \varepsilon') \ge 1 - \delta' \\
        \implies & \Pr (J_{M_r}(\pi_2) \leq r + \varepsilon') \ge 1 - \delta'.
    \end{align*}
    
    \textbf{Case 2: the OPS algorithm selects $\pi_2$.}{
       \begin{align*}
        & \Pr (J_{M_r}(\pi_2) \ge J_{M_r}(\pi^{\dagger}) - \varepsilon') \ge 1 - \delta' \\
        \implies & \Pr (J_{M_r}(\pi_2) \ge \text{max}(r, J_{M_r}(\pi_2)) - \varepsilon') \ge 1 - \delta' \\
        \implies & \Pr (J_{M_r}(\pi_2) \ge r - \varepsilon') \ge 1 - \delta'.
    \end{align*}
    }
    
    Given this information, we describe the iterative process by which we produce the estimate $\hat{J}(\pi)$. 
    We first set $U=V_{max},L=0$ and $r = \frac{U+L}{2}$ and run the sound OPS algorithm with input $D_r$ of sample size $n_r$ and the candidate set $\Pi$. 
    Then if the selected policy is $\pi_1$, then we conclude the desired event $J(\pi) \leq r + \varepsilon'$ occurs with probability at least $1-\delta'$, and set $U$ equal to $r$. 
    If the selected policy is $\pi_2$, then we know the desired event $J(\pi) \geq r - \varepsilon'$ occurs with probability at least $1-\delta'$, and set $L$ equal to $r$. 
    We can continue the bisection search until the accuracy is less than $\varepsilon'$, that is, $U-L\leq\varepsilon'$, and the output value estimate is $\hat J(\pi) = \frac{U+L}{2}$.
    
    If all desired events at each call occur, then we conclude that 
    $L-\varepsilon' \leq J(\pi) \leq U + \varepsilon'$
    and thus
    $|J(\pi) - \hat J(\pi)| \leq \varepsilon$. 
    The total number of OPS calls is at most $m$.
    Setting $\delta'=\delta/m$ and applying a union bound, we can conclude that with probability at least $1-\delta$, 
    $|J(\pi) - \hat J(\pi)| \leq \varepsilon$.

    Finally, since any $(\varepsilon,\delta)$-sound OPE algorithm on the instance $(M,d_b,\pi)$ needs at least $\Omega(\mathrm{N}_{OPE}(S,A,H,\varepsilon,1/\delta))$ samples, the $(\varepsilon',\delta')$-sound OPS algorithm also needs at least $\Omega(N_{OPE}(S,A,H,\varepsilon,1/\delta))$, or equivalently $\Omega(\mathrm{N}_{OPE}(S,A,H,3/(2\varepsilon'),1/(m\delta')))$ samples for at least one of the instances $(M_r,d_{b,r},\Pi)$.
\end{proof}

\subsection{Proof of Corollary \ref{corollary:ope_lower_bound}} \label{sec:proof_IS}

\explowerbound*
\begin{proof}
    We provide a proof which uses the construction from \citet{xiao2022curse}. They provide the result for the offline RL problem with Gaussian rewards. Here we provide the result for the OPE problem with Bernoulli rewards since we assume rewards are bounded.

    \begin{figure}
        \centering
        \includegraphics[width=0.6\linewidth]{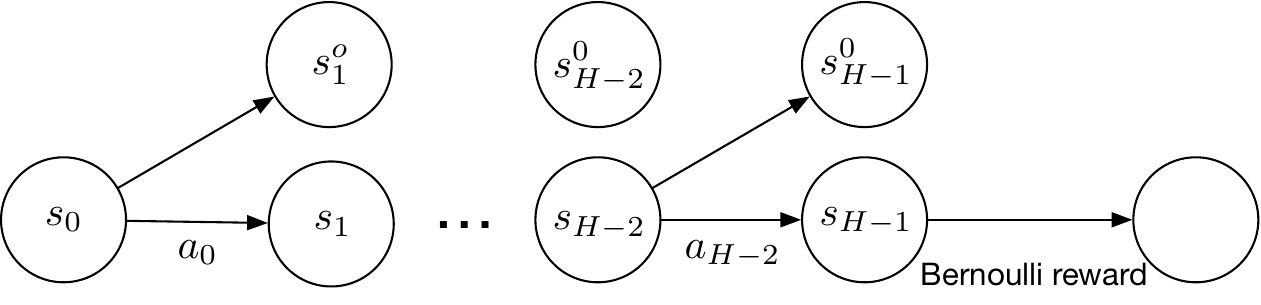}
        \caption{Lower bound construction.}
        \Description{AAMAS.}
        \label{fig:proof5}
    \end{figure}

    We can construct an MDP with $S$ states, $A$ actions and $2H$ states. See Figure \ref{fig:proof5} for the construction. Given any behavior policy $\pi_b$, let $a_h=\argmin_a \pi_b(a|s_h)$ be the action that leads to the next state $s_{h+1}$ from state $s_h$, and all other actions lead to an absorbing state $s_h^o$. 
    Once we reach an absorbing state, the agent gets zero reward for all actions for the remainder of the episode. 
    The only nonzero reward is in the last state $s_{H-1}$.
    Consider a target policy that chooses $a_h$ for state $s_h$ for all $h=0,\dots,H-1$, and two MDPs where the only difference between them is the reward distribution in $s_{H-1}$. MDP 1 has Bernoulli distribution with mean $1/2$ and MDP 2 has Bernoulli distribution with mean $1/2-2\varepsilon$. 
    Let $\bbP_1$ denote the probability
    measure with respect to MDP 1 and $\bbP_2$ denote the probability
    measure with respect to MDP 2.  
    
    Let $\hat r$ denote the OPE estimate by an algorithm $\cL$. 
    Define an event $E=\{\hat r < \frac{1}{2}-\varepsilon\}$. 
    Then $\cL$ is not $(\varepsilon,\delta)$-sound if $(\bbP_1(E) + \bbP_2(E^c))/2 \geq \delta$.
    This is because $\cL$ is not $(\varepsilon,\delta)$-sound 
    if either $\bbP_1(\hat r < \frac{1}{2}-\varepsilon) \geq\delta$ or $\bbP_2(\hat r > \frac{1}{2}-\varepsilon) \geq\delta$.
    
    Using the Bretagnolle–Huber inequality (see Theorem 14.2 of \citet{lattimore2020bandit}),
    we know 
    \begin{align*}
        \frac{\bbP_1(E) + \bbP_2(E^c)}{2} \geq \frac{1}{4}\exp{(-D_{KL}(\bbP_1,\bbP_2))}.
    \end{align*}
    By the chain rule for  KL-divergence and the fact that $\bbP_1$ and $\bbP_2$ only differ in the reward for $(s_{H-1},a_{H-1})$,
    we have 
    \begin{align*}
        D_{KL}(\bbP_1,\bbP_2) 
        &=  \EE_1\left[\sum_{i=1}^n \II\{S_{H-1}^{(i)}=s_{H-1},A_{H-1}^{(i)}=a_{H-1}\}\left(\frac{1}{2}\log{(\frac{1/2}{1/2-\varepsilon})} + \frac{1}{2}\log{(\frac{1/2}{1/2+\varepsilon})}\right)\right] \\
        &= \sum_{i=1}^n \bbP_1(S_{H-1}^{(i)}=s_{H-1},A_{H-1}^{(i)}=a_{H-1}) \left(-\frac{1}{2}\log{(1-4\varepsilon^2)}\right) \\
        &\leq \frac{n8\varepsilon^2}{A^H}.
    \end{align*}
    The last inequality follows from $-\log{(1-4\varepsilon^2)} \leq 8\varepsilon^2$ if $4\varepsilon^2 \leq 1/2$ \citep{krishnamurthy2016pac} and $\bbP_1(S_{H-1}^{(i)}=s_{H-1},A_{H-1}^{(i)}=a_{H-1})<1/A^H$ from the construction of the MDPs.
    
    Finally,
    \begin{align*}
        \frac{\bbP_1(E) + \bbP_2(E^c)}{2} \geq \frac{1}{4}\exp{(-\frac{n8\varepsilon^2}{A^H})}
    \end{align*}
    which is larger than $\delta$ if $n\leq A^H\ln(1/4\delta)/8\varepsilon^2$. 
    As a result, we need at least $\Omega(A^H \ln{(1/\delta)}/\varepsilon^2)$ episodes.
\end{proof}

\subsection{Error amplification for OPS using BE selection} \label{sec:proof_BE}

We first present the telescoping performance difference lemma, originally from Theorem 2 of \citet{xie2020q} for discounting setting, and Lemma 3.2 of \citet{duan2021risk} for the finite horizon setting.
In the following results, we will write $q_h(s,a)$ with the horizon $h$ explicitly for clarity. 

\begin{mylemma}{A.2}[Lemma 3.2 of \citet{duan2021risk}]
    Assume there exists a constant $C$ such that, for any $\pi\in\Pi \cup \{\pi^*\}$,  $\max_{h} \max_{s\in\cS_h,a\in\cA_h} \frac{d_h^\pi(s,a)}{\mu_h(s,a)} \leq C$. 
    For any $q\in\cQ$, let $\pi$ denotes the greedy policy with respect to $q=(q_0,\dots,q_{H-1})$, then
    \begin{align*}
        J(\pi) \geq  J(\pi^*) - 2 H \sqrt{C} \sqrt{\frac{1}{H}\sum_{h=0}^{H-1} \norm{q_h - \bellman q_{h+1}}_{2,\mu_h}} \ .
    \end{align*}
    \label{lemman:be_to_suboptimality}
\end{mylemma}

\begin{restatable}[Error amplification for OPS using BE selection]{corollary}{beerror}
    \label{lemma:be}
    Suppose 
    \begin{enumerate}
        \item there exists a constant $C$ such that $\forall\pi\in\Pi \cup \{\pi^*\}$,  $\underset{h\in [H]}{\max}  \underset{s\in\cS_h,a\in\cA_h}{\max} \frac{d_h^\pi(s,a)}{\mu_h(s,a)} \leq C$,
        \item the suboptimality of the candidate set is small, that is, $\underset{q\in\cQ}{\min} \ \mathcal{E}(q) \leq \varepsilon_{sub}$,
        \item there exists an $(\varepsilon_{est},\delta)$-sound BE selection algorithm $\cL$ on $\cQ$,
    \end{enumerate}
    then the OPS algorithm outputs the greedy policy with respect to $\cL(D,\cQ)$ is $(2 H \sqrt{C(\varepsilon_{sub}+\varepsilon_{est})},\delta)$-sound.
\end{restatable}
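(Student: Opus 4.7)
The plan is to combine the soundness of the BE selection algorithm with the classical Bellman-error-to-performance-loss bound, and then translate $\mu_h$-weighted norms to the target-policy distributions via the concentration coefficient. Let $\hat q \defeq \cL(D,\cQ)$ and let $\hat\pi$ denote the greedy policy with respect to $\hat q$; by construction $\hat\pi \in \Pi$. Assumptions (2) and (3) together give, with probability at least $1-\delta$,
\[
\cE(\hat q) \;\leq\; \min_{q\in\cQ}\cE(q) + \varepsilon_{est} \;\leq\; \varepsilon_{sub} + \varepsilon_{est}.
\]
It therefore suffices to show that on this event $J(\pi^*) - J(\hat\pi) \leq 2H\sqrt{C\,\cE(\hat q)}$, since $J(\pi^\dagger) \leq J(\pi^*)$.

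The main step is a classical telescoping identity. Writing $e_h \defeq \hat q - \bellman \hat q$ at step $h$, I would decompose
\[
v^{\pi^*}_0(s_0) - v^{\hat\pi}_0(s_0) = \bigl(v^{\pi^*}_0(s_0) - \hat q_0(s_0,\hat\pi(s_0))\bigr) + \bigl(\hat q_0(s_0,\hat\pi(s_0)) - v^{\hat\pi}_0(s_0)\bigr).
\]
For the first term, the greedy property $\hat q_0(s,\hat\pi(s)) \geq \hat q_0(s,\pi^*(s))$ lets me replace $\hat\pi$ by $\pi^*$, and then unrolling along trajectories generated by $\pi^*$ produces $\sum_h \mathbb{E}_{(s,a)\sim d_h^{\pi^*}}[-e_h(s,a)]$. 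For the second, unrolling directly along $\hat\pi$-trajectories and using $\max_{a'}\hat q_h(s,a') = \hat q_h(s,\hat\pi(s))$ gives $\sum_h \mathbb{E}_{(s,a)\sim d_h^{\hat\pi}}[e_h(s,a)]$. Upper bounding each term by its absolute value yields
\[
v^{\pi^*}_0(s_0) - v^{\hat\pi}_0(s_0) \;\leq\; \sum_{h=0}^{H-1}\Bigl(\mathbb{E}_{d_h^{\pi^*}}[|e_h|] + \mathbb{E}_{d_h^{\hat\pi}}[|e_h|]\Bigr).
\]

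Assumption (1) then allows me to change measure from $d_h^\pi$ to $\mu_h$: for any $\pi \in \Pi \cup \{\pi^*\}$, Jensen plus the concentration-coefficient bound give
\[
\mathbb{E}_{d_h^\pi}[|e_h|] \;\leq\; \sqrt{\mathbb{E}_{d_h^\pi}[e_h^2]} \;=\; \sqrt{\mathbb{E}_{\mu_h}\!\bigl[(d_h^\pi/\mu_h)\,e_h^2\bigr]} \;\leq\; \sqrt{C}\,\|e_h\|_{2,\mu_h}.
\]
A final Cauchy-Schwarz across time steps gives $\sum_h \|e_h\|_{2,\mu_h} \leq \sqrt{H}\sqrt{\sum_h \|e_h\|_{2,\mu_h}^2} = H\sqrt{\cE(\hat q)}$, so that $v^{\pi^*}_0(s_0) - v^{\hat\pi}_0(s_0) \leq 2H\sqrt{C\,\cE(\hat q)} \leq 2H\sqrt{C(\varepsilon_{sub}+\varepsilon_{est})}$, exactly the bound claimed.

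The main obstacle will be setting up the telescoping decomposition cleanly in the non-stationary finite-horizon setting and tracking the direction of the greedy inequality so that each step's residual $e_h$ appears with the correct sign (or gets absorbed into $|e_h|$). Once that bookkeeping is in place, the change-of-measure step and the two Cauchy-Schwarz applications are routine, and combining with the $1-\delta$ soundness event of $\cL$ completes the proof.
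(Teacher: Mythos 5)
Your proposal is correct and follows essentially the same route as the paper: the paper's proof simply invokes the telescoping performance-difference bound (its Lemma A.2, quoted from \citet{duan2021risk} and \citet{xie2020q}), which states exactly the inequality $J(\pi^*)-J(\hat\pi)\leq 2H\sqrt{C\,\cE(\hat q)}$ that you re-derive inline via the greedy decomposition, change of measure under the concentration coefficient, and Cauchy--Schwarz across time steps. The surrounding logic --- applying the $(\varepsilon_{est},\delta)$-soundness event to get $\cE(\hat q)\leq\varepsilon_{sub}+\varepsilon_{est}$ and noting $J(\pi^\dagger)\leq J(\pi^*)$ --- matches the paper's argument exactly.
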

\begin{proof}
    Since $\mathcal{E}(q_k) = \frac{1}{H} \sum_h \norm{q_{k,h} - \bellman q_{k,h+1}}_{\mu}^2 \leq \varepsilon_{sub} + \varepsilon_{est}$ with probability at least  $1-\delta$, Lemma \ref{lemman:be_to_suboptimality} implies that
     \begin{align*}
        J(\pi_k) 
        &\geq  J(\pi^*) - 2 H \sqrt{C(\varepsilon_{sub}+\varepsilon_{est})} 
        \geq  J(\pi^\dagger) - 2 H \sqrt{C(\varepsilon_{sub}+\varepsilon_{est})}
    \end{align*}
    where $\pi^\dagger$ is the best performing policy in $\Pi$. 
    
    By definition, the OPS algorithm is $(2 H \sqrt{C(\varepsilon_{sub}+\varepsilon_{est})},\delta)$-sound for this instance. 
\end{proof}

\section{Experimental Details} \label{sec:experiment_detail}

In this section, we provide the experimental details for classic RL environments and Atari environments.

\subsection{FQE Implementation}

We first describe FQE in Algorithm \ref{alg:fqe}. Note that it is unclear how to perform model selection for FQE, as a result, we fix the function approximator as a two layer neural network with hidden size $256$ for classic control experiments, and as a convolutional neural network followed by single linear layer for Atari experiments, using the architecture of \citet{mnih2013playing}.

\begin{algorithm}[t]
\caption{OPS using FQE}\label{alg:fqe}
\begin{algorithmic}
\State \textbf{Input}: Candidate set $\Pi$, evaluation data $D$
\State \textbf{Hyperparameters}: Function class $\cF$
\For{$\pi\in\Pi$}
\State Define $q_H(s,a)=0$ for all $(s,a)$
\For {$h=H-1,\dots,0$}
\State $q_h \gets \arg\min_{f\in\cF} \hat l_h(f, q_{h+1})$ where
\begin{equation*}
    \hat l_h(f, q_{h+1})
    \defeq \frac{1}{|D_h|} \sum_{(s,a,r,s')\in D_h} (f(s,a) - r -  q_{h+1}(s',\pi(s')))^2
\end{equation*}
\EndFor
\State Estimate the policy value $\hat J(\pi) 
    \gets \EE_{a\sim\pi(\cdot|s_0)}[ q_0(s_0,a) ]$
\EndFor
\State Output: $\pi^\dagger \gets \arg\max_{\pi\in\Pi} \hat J(\pi)$
\end{algorithmic}
\end{algorithm}

It is known that FQE can diverge, due to the fact that it combines off-policy learning with bootstrapping and function approximation, known as the deadly triad \citep{sutton2018reinforcement}. 
If one of the candidate policies is not well-covered, then the FQE estimate may overestimate the value of the uncovered policy (or even diverge to a very large value) and resulting in poor OPS. 
To circumvent the issue of uncovered policies, we need assign low value estimates for uncovered policies. 
In our FQE implementation, we assign low value estimates to policies for which FQE diverges so the OPS algorithm would not choose these policies. 
That is, if the value estimate is above a threshold $U=V_{max}+100$, we would not choose the policy.

\subsection{Classic RL Experiments}
\paragraph{Stochastic environments.}
We implement stochastic environments by sticky actions. That is, when the agent selects an action to the environment, the action might be repeated with probability $25\%$, up to a maximum of $4$ repeats. 

\paragraph{Generating candidate policies.}
To generate a set of candidate policies, we run CQL with different hyperparameter configurations on a batch of data with $300$ episodes collected with an $\varepsilon$-greedy policy with respect to a trained policy where $\varepsilon=0.4$.
The hyperparameter configurations include:
\begin{itemize}
    \item Learning rate $\in\{0.001, 0.0003, 0.0001\}$
    \item Network hidden layer size $\in\{128,256,512\}$
    \item Regularization coefficient $\in\{1.0,0.1,0.01,0.001,0.0\}$
    \item Iterations of CQL $\in\{100,200\}$
\end{itemize}
As a result, we have $90$ candidate policies. 

\paragraph{Generating data for OPS}
To generate data for offline policy selection, we use two different data distributions. (a) a data distribution collected by running a mixture of all candidate policies. As a result, the data distribution covers all candidate policies well; and (b) a data distribution collected by running the mixture of all candidate policies and an $\varepsilon$-greedy optimal policy that provides more diverse trajectories than (a).

\paragraph{Randomness across multiple runs.}
To perform experiments with multiple runs, we fix the offline data and the candidate policies and only resample the offline data for OPS. This better reflects the theoretical result that the randomness is from resampling the data for an OPS algorithm. 
In our experiments, we use $10$ runs and report the average regret with one standard error.
Since the variability across runs is not large, we find using $10$ runs is enough.

\paragraph{Random selection baseline.}
We include a random selection baseline that randomly chooses $k$ policies. Since the random selection algorithm has very high variance, we estimate the expected regret of random selection by repeating the random selection $10000$ times, and report the average regret.

\paragraph{BVFT}
BVFT has a hyperparameter: the discretization resolution. We follow the method described in the original paper to search for the best resolution from a set of predefined values. Note that in the authors' implementation, they use different sets for different environments. 

\subsection{Atari Experiments}

\paragraph{Generating candidate policies.}
To generate a set of candidate policies, we run CQL with the hyperparameters used in the original paper: 
\begin{itemize}
    \item Regularization coefficient $\in\{0.5,4.0,5.0\}$
    \item Number of gradient steps $\in\{50k, 100k, 150k,\dots,1000k\}$
\end{itemize}

\paragraph{Randomness across multiple runs.}
Similar to the classic RL experiments, we fix the candidate policies and only resample the offline data for OPS. 
In our experiments, we use $5$ different DQN replay dataset and report the average regret with one standard error.

\subsection{Compute}
For Atari experiments, we used NVIDIA Tesla V100 GPUs.
Each run for generating candidate policies, running either FQE or IBES, took less than 6 hours. 
For classic control experiments, we used only CPUs.
Each run for generating candidate policies, running either FQE or IBES, took less than 3 hours.

\end{document}